\documentclass[12pt,a4paper]{article}

\usepackage[utf8]{inputenc}
\usepackage[T1]{fontenc}
\usepackage{amsmath,amssymb,amsthm}
\usepackage{mathtools}
\usepackage{graphicx}
\usepackage{booktabs}
\usepackage{tabularx}
\usepackage{multirow}
\usepackage{float}
\usepackage[margin=2.5cm]{geometry}
\usepackage{setspace}
\usepackage{natbib}
\usepackage{hyperref}
\usepackage{cleveref}
\usepackage{caption}
\usepackage{subcaption}
\usepackage[ruled,vlined]{algorithm2e}

\newtheorem{proposition}{Proposition}
\newtheorem{remark}{Remark}

\title{\textbf{Variational Inference for Bayesian MIDAS Regression}}
\author{Luigi Simeone\thanks{Independent Researcher. Contact: luigi.simeone@tech-management.net.}}
\date{February 2026}

\begin{document}

\maketitle

\begin{abstract}
\noindent We develop a Coordinate Ascent Variational Inference (CAVI) algorithm for Bayesian Mixed Data Sampling (MIDAS) regression with linear weight parameterizations. The model separates impact coefficients from weighting function parameters through a normalization constraint, creating a bilinear structure that renders generic Hamiltonian Monte Carlo samplers unreliable while preserving conditional conjugacy exploitable by CAVI. Each variational update admits a closed-form solution: Gaussian for regression coefficients and weight parameters, Inverse-Gamma for the error variance. The algorithm propagates uncertainty across blocks through second moments, distinguishing it from naive plug-in approximations. In a Monte Carlo study spanning 21 data-generating configurations with up to 50 predictors, CAVI produces posterior means nearly identical to a block Gibbs sampler benchmark while achieving speedups of $107\times$ to $1{,}772\times$ (Table~\ref{tab:timing}). Generic automatic differentiation VI (ADVI), by contrast, produces bias 7--14 times larger while being orders of magnitude slower, confirming the value of model-specific derivations. Weight function parameters maintain excellent calibration (coverage above 92\%) across all configurations. Impact coefficient credible intervals exhibit the underdispersion characteristic of mean-field approximations, with coverage declining from 89\% to 55\% as the number of predictors grows---a documented trade-off between speed and interval calibration that structured variational methods can address. An empirical application to realized volatility forecasting on S\&P~500 daily returns confirms that CAVI and Gibbs sampling yield virtually identical point forecasts, with CAVI completing each monthly estimation in under 10 milliseconds.

\medskip
\noindent\textbf{Keywords:} MIDAS regression, Variational inference, CAVI, Bayesian econometrics, Mixed-frequency data, Realized volatility

\medskip
\noindent\textbf{JEL Classification:} C11, C53, C55, C63
\end{abstract}

\newpage

\section{Introduction}

Mixed Data Sampling (MIDAS) regression, introduced by \citet{ghysels2006}, provides a parsimonious framework for incorporating high-frequency data into low-frequency prediction equations. The approach has become standard in macroeconomic nowcasting and financial volatility forecasting, where daily or intraday observations carry information about monthly or quarterly aggregates. Bayesian treatments of MIDAS models offer natural uncertainty quantification and regularization through prior distributions---features particularly valuable when the number of high-frequency predictors is moderate to large \citep{mogliani2021,kohns2025}.

Posterior inference in Bayesian MIDAS models relies almost exclusively on Markov Chain Monte Carlo (MCMC) methods. The canonical framework of \citet{chan2025} employs precision-based sampling \citep{chanjeliazkov2009} for time-varying parameter MIDAS with linear weight parameterizations. \citet{pettenuzzo2016} use a four-block Gibbs sampler extending Kim-Shephard-Chib methods, while \citet{mogliani2021} develop adaptive MCMC for hierarchical MIDAS with 134 predictors. These methods are asymptotically exact but computationally intensive: each posterior evaluation requires thousands of sampling iterations, and mixing efficiency degrades as the parameter space grows.

Variational inference (VI) offers a fundamentally different approach, recasting posterior approximation as an optimization problem \citep{blei2017}. Rather than generating dependent samples from the posterior, VI seeks the closest member of a tractable distribution family by maximizing the evidence lower bound (ELBO). The resulting deterministic algorithm converges in a fraction of the time required by MCMC, producing both point estimates and approximate uncertainty quantification. When conditional conjugacy is unavailable, automatic differentiation VI \citep{kucukelbir2017} provides a generic black-box implementation using stochastic gradients; when the model structure permits, coordinate ascent methods yield deterministic closed-form updates. In the mixed-frequency econometrics literature, \citet{gefang2020} demonstrated that variational Bayes methods for large mixed-frequency VARs produce accurate results while being orders of magnitude faster than MCMC. Their subsequent work \citep{gefang2023} extended VI to VARs with horseshoe, LASSO, and stochastic volatility specifications. \citet{chanyu2022} contributed global Gaussian variational approximations for log-volatilities in large Bayesian VARs, and \citet{loaiza2022} developed structured variational approximations for time-varying parameter VAR models that substantially improve upon mean-field methods.

Despite this progress in mixed-frequency VARs, \textbf{variational inference has never been applied to MIDAS regression models}. This gap is methodologically significant. MIDAS regressions are structurally distinct from mixed-frequency VARs: the separation between weighting function parameters~$\boldsymbol{\theta}_j$ (controlling \emph{how} high-frequency observations are aggregated) and impact coefficients~$\beta_j$ (controlling \emph{how much} each predictor matters) generates a bilinear product $\beta_j \cdot \tilde{x}_t^{(j)}(\boldsymbol{\theta}_j)$ that is absent in the VAR setting. This bilinear structure has three consequences for posterior computation that motivate the present paper.

\emph{First}, generic Hamiltonian Monte Carlo (HMC) samplers---including the widely used No-U-Turn Sampler (NUTS; \citealp{hoffman2014})---fail on this model. The bilinear product creates a posterior geometry analogous to Neal's funnel \citep{neal2003}: when $|\beta_j|$ is large, the posterior is tightly concentrated in~$\boldsymbol{\theta}_j$; when $|\beta_j|$ is near zero, $\boldsymbol{\theta}_j$ is nearly unidentified. This curvature variation across the parameter space causes NUTS trajectories to diverge, producing biased estimates and unreliable diagnostics. Our implementation of NUTS for the MIDAS model consistently exhibited numerical divergences and non-convergent chains, confirming that off-the-shelf HMC is not a viable option.

\emph{Second}, well-designed block Gibbs samplers---which condition on one parameter block at a time, collapsing the bilinear term to a linear regression---do produce valid posterior samples. However, the alternating conditional structure creates autocorrelation between successive draws that worsens with the number of predictors. In our Monte Carlo experiments, the minimum effective sample size (ESS) of the Gibbs sampler drops from 4,103 at $J=1$ to 539 at $J=10$ and further to 134 at $J=25$, indicating that an increasing fraction of computational effort produces redundant samples.

\emph{Third}, the very conditional structure that enables Gibbs sampling---conditional on~$\boldsymbol{\theta}_j$, the model is linear in $(\alpha, \boldsymbol{\beta})$; conditional on $(\alpha, \boldsymbol{\beta})$, it is linear in each~$\boldsymbol{\theta}_j$---also enables Coordinate Ascent Variational Inference (CAVI) with closed-form updates. This is the central observation of the paper. The linear weight parameterization introduced by \citet{chan2025}---using Almon polynomials, Fourier series, or B-splines as basis functions---maintains conditional conjugacy across all parameter blocks. We exploit this to derive a CAVI algorithm where every update step has an analytic solution: Gaussian for regression coefficients and weight parameters, Inverse-Gamma for the error variance.

\paragraph{Contribution.} This paper makes three contributions. First, we develop the first variational inference algorithm for MIDAS regression, deriving closed-form CAVI updates that handle the bilinear structure through a mean-field factorization with variance-corrected moments. The updates propagate uncertainty across blocks via second moments---specifically, the variance of~$\beta_j$ enters the precision matrix of~$\boldsymbol{\eta}_j$, and the variance of~$\boldsymbol{\eta}_j$ enters the design matrix for~$\boldsymbol{\xi}$---producing more calibrated estimates than naive plug-in approaches. The covariance correction in the weight parameter update, arising from the joint treatment of $(\alpha, \boldsymbol{\beta})$, is a technical contribution specific to the MIDAS bilinear setting.

Second, we provide a comprehensive Monte Carlo evaluation across 21 data-generating configurations spanning $J \in \{1, 3, 5, 10, 25, 50\}$ predictors, $T \in \{50, 100, 200, 400\}$ observations, varying signal-to-noise ratios, weight profile shapes, basis function types, and frequency ratios. CAVI posterior means are nearly identical to Gibbs: bias differs by less than 0.03 across configurations, including $J=25$ where 50 Gibbs replications confirm close agreement. Computational speedups range from $107\times$ ($J=25$) to $1{,}772\times$ ($J=1$) in the baseline tier (Table~\ref{tab:scalability}), reaching down to $136\times$ under stress-test conditions with $K=65$ lags (Table~\ref{tab:timing}), and grow with the number of predictors---precisely where MCMC becomes most expensive. Generic ADVI, by contrast, produces bias 7--14 times larger while being 2,000--100,000 times slower, confirming that model-specific derivations are essential. Weight function parameters ($\boldsymbol{\eta}$) maintain coverage above 92\% in all configurations. Impact coefficient ($\beta$) credible intervals exhibit the underdispersion characteristic of mean-field VI, with coverage declining from 89\% ($J=1$) to 55\% ($J=50$)---a pattern we explain through the interaction between dimension-dependent cross-block correlations and the mean-field factorization.

Third, we demonstrate the method on a canonical empirical application: forecasting realized volatility of the S\&P~500 using daily squared returns in a MIDAS-RV specification \citep{ghysels2006}. Over 187 out-of-sample months, CAVI and Gibbs produce virtually identical forecasts (relative MSE difference below 0.01\%), with CAVI completing each estimation in under 10 milliseconds versus 1--2.5 seconds for Gibbs.

\paragraph{Positioning.} This paper occupies a specific niche in the literature. Relative to \citet{chan2025}, we adopt their model specification---the linear parameterization and null-space reparameterization of the normalization constraint---but replace their precision-based MCMC with variational inference. Relative to \citet{gefang2020}, we translate variational methods from mixed-frequency VARs to MIDAS regression, handling the bilinear $\beta \cdot \boldsymbol{\theta}$ structure and normalization constraint that are specific to MIDAS and absent in VARs. Relative to generic ADVI \citep{kucukelbir2017}, our model-specific updates are both orders of magnitude faster and substantially more accurate (Section~4.5), because they exploit conditional conjugacy that stochastic gradient methods cannot leverage.

The paper proceeds as follows. Section~2 specifies the Bayesian MIDAS model. Section~3 derives the CAVI algorithm. Section~4 reports the Monte Carlo study. Section~5 presents the empirical application. Section~6 concludes.

\section{Bayesian MIDAS Model}

\subsection{Observation equation}

Consider a low-frequency dependent variable $y_t$, $t = 1, \ldots, T$, and $J$ high-frequency predictors. For predictor~$j$, denote the vector of $K_j$ high-frequency observations associated with low-frequency period~$t$ as $\mathbf{x}_t^{(j)} = (x_{t,0}^{(j)}, \ldots, x_{t,K_j-1}^{(j)})^\top \in \mathbb{R}^{K_j}$. The MIDAS regression model is
\begin{equation}\label{eq:obs}
y_t = \alpha + \sum_{j=1}^{J} \beta_j \, \tilde{x}_t^{(j)}(\boldsymbol{\theta}_j) + \varepsilon_t, \qquad \varepsilon_t \overset{\text{iid}}{\sim} \mathcal{N}(0, \sigma^2), \qquad t = 1, \ldots, T,
\end{equation}
where $\alpha$ is the intercept, $\beta_j$ is the impact coefficient for predictor~$j$, and the MIDAS aggregation is
\begin{equation}\label{eq:agg}
\tilde{x}_t^{(j)}(\boldsymbol{\theta}_j) = \sum_{k=0}^{K_j - 1} w_j(k; \boldsymbol{\theta}_j) \, x_{t,k}^{(j)}.
\end{equation}
The separation of~$\beta_j$ (how much predictor~$j$ matters) from~$\boldsymbol{\theta}_j$ (how its high-frequency lags are weighted) is a defining feature of MIDAS.

\subsection{Linear weight parameterization}

The original MIDAS specification of \citet{ghysels2006} employs nonlinear weighting functions---exponential Almon or beta polynomials---that create computational difficulties: the posterior conditionals of~$\boldsymbol{\theta}_j$ do not belong to standard families, ruling out Gibbs sampling and requiring particle filters \citep{schumacher2014}. \citet{foroni2015} demonstrated that unrestricted lag polynomials estimated by OLS can perform comparably for small frequency ratios, motivating the shift toward linear weight specifications.

Following \citet{chan2025}, we adopt a \textbf{linear parameterization}:
\begin{equation}\label{eq:linparam}
w_j(k; \boldsymbol{\theta}_j) = \sum_{p=1}^{P_j} \theta_{j,p} \, \phi_p(k) = \boldsymbol{\phi}(k)^\top \boldsymbol{\theta}_j,
\end{equation}
where $\phi_1(\cdot), \ldots, \phi_{P_j}(\cdot)$ are predetermined basis functions and $\boldsymbol{\theta}_j \in \mathbb{R}^{P_j}$. Our primary specification uses \textbf{Almon polynomials}: $\phi_p(k) = k^{p-1}$. As a robustness check, we also employ cubic B-splines with uniformly spaced knots.

Using \eqref{eq:linparam}, the observation equation becomes
\begin{equation}\label{eq:bilinear}
y_t = \alpha + \sum_{j=1}^{J} \beta_j \, \mathbf{x}_t^{(j)\top} \boldsymbol{\Phi}_j \boldsymbol{\theta}_j + \varepsilon_t,
\end{equation}
where $\boldsymbol{\Phi}_j \in \mathbb{R}^{K_j \times P_j}$ is the basis matrix. The product $\beta_j \cdot (\mathbf{x}_t^{(j)\top} \boldsymbol{\Phi}_j \boldsymbol{\theta}_j)$ is \textbf{bilinear} in the unknown parameters~$\beta_j$ and~$\boldsymbol{\theta}_j$---the source of both the computational challenge and the opportunity.

\begin{remark}[Negative weights]
The linear parameterization permits negative weights for certain values of~$\boldsymbol{\theta}_j$, unlike nonlinear specifications. This is a trade-off: computational tractability in exchange for relaxed economic interpretability. In practice, estimated weight profiles are well-behaved, as confirmed by our Monte Carlo and empirical results.
\end{remark}

\subsection{Normalization constraint and reparameterization}

The MIDAS model in~\eqref{eq:bilinear} suffers from a scale indeterminacy: for any $c \neq 0$, the transformation $(\beta_j, \boldsymbol{\theta}_j) \mapsto (c\beta_j, \boldsymbol{\theta}_j / c)$ leaves the likelihood invariant. The standard resolution imposes that weights sum to unity:
\begin{equation}\label{eq:constraint}
\mathbf{c}_j^\top \boldsymbol{\theta}_j = 1, \qquad \text{where} \quad \mathbf{c}_j = \boldsymbol{\Phi}_j^\top \mathbf{1}_{K_j} \in \mathbb{R}^{P_j}.
\end{equation}
We handle this through a \textbf{null-space reparameterization} \citep{chan2025}. Any $\boldsymbol{\theta}_j$ satisfying~\eqref{eq:constraint} can be decomposed as
\begin{equation}\label{eq:reparam}
\boldsymbol{\theta}_j = \boldsymbol{\theta}_j^0 + \mathbf{N}_j \boldsymbol{\eta}_j,
\end{equation}
where $\boldsymbol{\theta}_j^0 = \mathbf{c}_j / \|\mathbf{c}_j\|^2$ is the minimum-norm particular solution, $\mathbf{N}_j \in \mathbb{R}^{P_j \times (P_j - 1)}$ has orthonormal columns spanning $\ker(\mathbf{c}_j^\top)$, and $\boldsymbol{\eta}_j \in \mathbb{R}^{P_j - 1}$ is the free parameter. Since $\mathbf{c}_j^\top \mathbf{N}_j = \mathbf{0}$, the constraint is automatically satisfied.

Substituting~\eqref{eq:reparam} into the aggregation yields
\begin{equation}\label{eq:reparam_agg}
\tilde{x}_t^{(j)} = \underbrace{\mathbf{x}_t^{(j)\top} \boldsymbol{\Phi}_j \boldsymbol{\theta}_j^0}_{c_t^{(j)}} + \underbrace{\mathbf{x}_t^{(j)\top} \boldsymbol{\Phi}_j \mathbf{N}_j}_{\mathbf{r}_t^{(j)\top}} \boldsymbol{\eta}_j,
\end{equation}
where $c_t^{(j)}$ is a known scalar and $\mathbf{r}_t^{(j)} \in \mathbb{R}^{P_j - 1}$ is the reduced MIDAS regressor. The reparameterized model is
\begin{equation}\label{eq:model_reparam}
y_t = \alpha + \sum_{j=1}^{J} \beta_j \left[ c_t^{(j)} + \mathbf{r}_t^{(j)\top} \boldsymbol{\eta}_j \right] + \varepsilon_t.
\end{equation}
This form makes conditional linearity transparent: given $\{\boldsymbol{\eta}_j\}$, the model is linear in $\boldsymbol{\xi} = (\alpha, \beta_1, \ldots, \beta_J)^\top$; given~$\boldsymbol{\xi}$, it is linear in each~$\boldsymbol{\eta}_j$.

\subsection{Prior specification}

We assign independent priors:
\begin{align}
\boldsymbol{\eta}_j &\sim \mathcal{N}(\mathbf{0}, \sigma_\eta^2 \, \mathbf{I}_{P_j-1}), \qquad \sigma_\eta^2 = 1, \label{eq:prior_eta} \\
\alpha &\sim \mathcal{N}(0, 100), \qquad \beta_j \sim \mathcal{N}(0, 10), \label{eq:prior_xi} \\
\sigma^2 &\sim \text{Inverse-Gamma}(0.01, 0.01). \label{eq:prior_sigma}
\end{align}
The prior on~$\boldsymbol{\eta}_j$ is centered at zero, corresponding to the weight profile induced by~$\boldsymbol{\theta}_j^0$. The priors on $\alpha$ and~$\beta_j$ are weakly informative, and the Inverse-Gamma prior on~$\sigma^2$ is the standard diffuse specification.

\begin{remark}[Choice of priors]
We use identical Gaussian priors for all~$\beta_j$ rather than shrinkage priors. This deliberate choice isolates algorithm performance from prior specification effects. In the Monte Carlo study, null coefficients ($\beta_j = 0$) are recovered with bias below 0.005 without shrinkage. Extension to global-local shrinkage priors is straightforward.
\end{remark}

\subsection{Posterior}

The joint posterior is
\begin{equation}\label{eq:posterior}
p(\boldsymbol{\xi}, \boldsymbol{\eta}_1, \ldots, \boldsymbol{\eta}_J, \sigma^2 \mid \mathbf{y}) \propto p(\mathbf{y} \mid \boldsymbol{\xi}, \boldsymbol{\eta}, \sigma^2) \, p(\boldsymbol{\xi}) \prod_{j=1}^{J} p(\boldsymbol{\eta}_j) \, p(\sigma^2).
\end{equation}
Due to the bilinear terms, the marginal likelihood has no closed form. However, the \textbf{conditional posteriors} are standard: $p(\boldsymbol{\xi} \mid \boldsymbol{\eta}, \sigma^2, \mathbf{y})$ is Gaussian, each $p(\boldsymbol{\eta}_j \mid \boldsymbol{\xi}, \boldsymbol{\eta}_{-j}, \sigma^2, \mathbf{y})$ is Gaussian, and $p(\sigma^2 \mid \boldsymbol{\xi}, \boldsymbol{\eta}, \mathbf{y})$ is Inverse-Gamma.

\begin{remark}[Failure of NUTS]
We attempted estimation using NUTS \citep{hoffman2014}, which operates on the joint parameter space without exploiting conditional structure. The bilinear product creates a funnel-like geometry \citep{neal2003} where posterior curvature varies by orders of magnitude. NUTS trajectories consistently produced numerical divergences, confirming that generic HMC is ill-suited to this model class.
\end{remark}

\section{Variational Inference for MIDAS Regression}

\subsection{Mean-field approximation}

We approximate the posterior~\eqref{eq:posterior} with
\begin{equation}\label{eq:meanfield}
q(\boldsymbol{\xi}, \boldsymbol{\eta}_1, \ldots, \boldsymbol{\eta}_J, \sigma^2) = q(\boldsymbol{\xi}) \prod_{j=1}^{J} q(\boldsymbol{\eta}_j) \, q(\sigma^2),
\end{equation}
chosen to maximize the evidence lower bound (ELBO),
\begin{equation}\label{eq:elbo}
\mathcal{L}(q) = \mathbb{E}_q[\log p(\mathbf{y}, \boldsymbol{\xi}, \boldsymbol{\eta}, \sigma^2)] - \mathbb{E}_q[\log q(\boldsymbol{\xi}, \boldsymbol{\eta}, \sigma^2)] \leq \log p(\mathbf{y}).
\end{equation}

The intercept and impact coefficients are kept in a \textbf{joint block}~$\boldsymbol{\xi}$, preserving their posterior correlations. The weight parameters $\boldsymbol{\eta}_j$ are separated from~$\boldsymbol{\xi}$ and from each other---this is the mean-field approximation that ignores cross-block correlations in exchange for tractability. Under the standard mean-field result \citep{blei2017}, the optimal factor $q_m^*$ satisfies
\begin{equation}\label{eq:cavi_rule}
\log q_m^*(\boldsymbol{\psi}_m) = \mathbb{E}_{-m}[\log p(\mathbf{y}, \boldsymbol{\psi})] + \text{const}.
\end{equation}

\subsection{Block 1: Regression coefficients}

Define the effective regressor vector $\tilde{\mathbf{x}}_t = (1, \tilde{x}_t^{(1)}, \ldots, \tilde{x}_t^{(J)})^\top \in \mathbb{R}^{J+1}$.

\begin{proposition}\label{prop:block1}
Under the mean-field factorization~\eqref{eq:meanfield}, the optimal variational distribution for the regression coefficients is $q^*(\boldsymbol{\xi}) = \mathcal{N}(\boldsymbol{\mu}_\xi, \boldsymbol{\Sigma}_\xi)$ with
\begin{align}
\boldsymbol{\Sigma}_\xi &= \left( \mathbb{E}_q[\sigma^{-2}] \sum_{t=1}^{T} \mathbb{E}_q[\tilde{\mathbf{x}}_t \tilde{\mathbf{x}}_t^\top] + \boldsymbol{\Lambda}_\xi \right)^{-1}, \label{eq:Sigma_xi} \\
\boldsymbol{\mu}_\xi &= \boldsymbol{\Sigma}_\xi \left( \mathbb{E}_q[\sigma^{-2}] \sum_{t=1}^{T} y_t \, \mathbb{E}_q[\tilde{\mathbf{x}}_t] \right), \label{eq:mu_xi}
\end{align}
where $\boldsymbol{\Lambda}_\xi = \mathrm{diag}(\sigma_\alpha^{-2}, \sigma_\beta^{-2}, \ldots, \sigma_\beta^{-2})$ is the prior precision matrix.
\end{proposition}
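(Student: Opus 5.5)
We apply the CAVI optimality condition \eqref{eq:cavi_rule} to the block $\boldsymbol{\psi}_m = \boldsymbol{\xi}$. We then show that the resulting log-density is a quadratic form in $\boldsymbol{\xi}$, and read off its precision and mean.

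First we write the log joint \eqref{eq:posterior}, keeping only terms that involve $\boldsymbol{\xi}$. The model \eqref{eq:model_reparam} can be written as $y_t = \tilde{\mathbf{x}}_t^\top \boldsymbol{\xi} + \varepsilon_t$, where $\tilde{\mathbf{x}}_t$ depends only on $\{\boldsymbol{\eta}_j\}$ through $\tilde{x}_t^{(j)} = c_t^{(j)} + \mathbf{r}_t^{(j)\top}\boldsymbol{\eta}_j$. This gives
\[
\log p(\mathbf{y}, \boldsymbol{\psi}) = -\frac{1}{2\sigma^2}\sum_{t=1}^T \bigl(y_t - \tilde{\mathbf{x}}_t^\top \boldsymbol{\xi}\bigr)^2 - \tfrac12 \boldsymbol{\xi}^\top \boldsymbol{\Lambda}_\xi \boldsymbol{\xi} + (\text{terms free of } \boldsymbol{\xi}).
\]
The prior term comes from \eqref{eq:prior_xi}, with zero prior mean, which is why no prior-mean contribution appears in \eqref{eq:mu_xi}. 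Expanding the square yields $-\frac{1}{2\sigma^2}\sum_t \bigl(\boldsymbol{\xi}^\top \tilde{\mathbf{x}}_t\tilde{\mathbf{x}}_t^\top \boldsymbol{\xi} - 2 y_t \tilde{\mathbf{x}}_t^\top \boldsymbol{\xi}\bigr)$ plus a constant.

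Next we take the expectation over $q(\sigma^2)\prod_j q(\boldsymbol{\eta}_j)$. Under the factorization \eqref{eq:meanfield}, $\sigma^2$ is independent of $\{\boldsymbol{\eta}_j\}$ under $q$. The expectation of $\sigma^{-2}\tilde{\mathbf{x}}_t\tilde{\mathbf{x}}_t^\top$ therefore factors as $\mathbb{E}_q[\sigma^{-2}]\,\mathbb{E}_q[\tilde{\mathbf{x}}_t\tilde{\mathbf{x}}_t^\top]$, and similarly $\mathbb{E}_q[\sigma^{-2} y_t\tilde{\mathbf{x}}_t] = \mathbb{E}_q[\sigma^{-2}]\, y_t\,\mathbb{E}_q[\tilde{\mathbf{x}}_t]$. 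This factorization is the one point where care is needed, and it is essential that $\boldsymbol{\xi}$ is held fixed (not integrated) in this step. The second moment $\mathbb{E}_q[\tilde{\mathbf{x}}_t\tilde{\mathbf{x}}_t^\top]$ is in general not equal to $\mathbb{E}_q[\tilde{\mathbf{x}}_t]\mathbb{E}_q[\tilde{\mathbf{x}}_t]^\top$, since its diagonal entries pick up $\mathbf{r}_t^{(j)\top}\mathrm{Var}_q(\boldsymbol{\eta}_j)\mathbf{r}_t^{(j)}$. Its off-diagonal $(j,k)$ entries, $j\neq k$, factor because $q(\boldsymbol{\eta}_j)$ and $q(\boldsymbol{\eta}_k)$ are independent. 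The proposition only requires the expression to be left as $\mathbb{E}_q[\tilde{\mathbf{x}}_t\tilde{\mathbf{x}}_t^\top]$, so we need only note that it is finite, given Gaussian factors or any factors with finite second moments.

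Finally, we collect terms to obtain
\[
\log q^*(\boldsymbol{\xi}) = -\tfrac12 \boldsymbol{\xi}^\top \Bigl(\mathbb{E}_q[\sigma^{-2}]\textstyle\sum_t \mathbb{E}_q[\tilde{\mathbf{x}}_t\tilde{\mathbf{x}}_t^\top] + \boldsymbol{\Lambda}_\xi\Bigr)\boldsymbol{\xi} + \boldsymbol{\xi}^\top \Bigl(\mathbb{E}_q[\sigma^{-2}]\textstyle\sum_t y_t \mathbb{E}_q[\tilde{\mathbf{x}}_t]\Bigr) + \text{const}.
\]
We complete the square and then check that the matrix in parentheses is positive definite. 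This holds because $\boldsymbol{\Lambda}_\xi \succ 0$, $\mathbb{E}_q[\sigma^{-2}]>0$, and each $\mathbb{E}_q[\tilde{\mathbf{x}}_t\tilde{\mathbf{x}}_t^\top]$ is a second-moment matrix and hence positive semidefinite. It follows that $q^*(\boldsymbol{\xi})$ is the Gaussian with precision $\boldsymbol{\Sigma}_\xi^{-1}$ as in \eqref{eq:Sigma_xi} and mean $\boldsymbol{\Sigma}_\xi$ times the linear coefficient, which is \eqref{eq:mu_xi}. There is no real obstacle: the only substantive point is the independence argument in the expectation step, and the rest is routine quadratic algebra.
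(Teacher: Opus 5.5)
Your proposal is correct and follows essentially the same route as the paper: apply the CAVI rule to $\boldsymbol{\xi}$, take the expectation of the Gaussian log-likelihood over $q(\sigma^2)\prod_j q(\boldsymbol{\eta}_j)$, and complete the square to read off the precision and mean. You make explicit three points the paper's sketch leaves implicit: the factorization $\mathbb{E}_q[\sigma^{-2}\tilde{\mathbf{x}}_t\tilde{\mathbf{x}}_t^\top] = \mathbb{E}_q[\sigma^{-2}]\,\mathbb{E}_q[\tilde{\mathbf{x}}_t\tilde{\mathbf{x}}_t^\top]$ from mean-field independence, the factorization of the $\beta_j$--$\beta_k$ off-diagonal entries, and the positive definiteness of the resulting precision matrix.
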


The required moments are $\mathbb{E}_q[\tilde{x}_t^{(j)}] = c_t^{(j)} + \mathbf{r}_t^{(j)\top} \boldsymbol{\mu}_{\eta_j}$ and
\begin{equation}\label{eq:second_moment}
\mathbb{E}_q[(\tilde{x}_t^{(j)})^2] = \left(\mathbb{E}_q[\tilde{x}_t^{(j)}]\right)^2 + \mathbf{r}_t^{(j)\top} \boldsymbol{\Sigma}_{\eta_j} \, \mathbf{r}_t^{(j)}.
\end{equation}
The \textbf{variance correction} term $\mathbf{r}_t^{(j)\top} \boldsymbol{\Sigma}_{\eta_j} \mathbf{r}_t^{(j)}$ propagates uncertainty about~$\boldsymbol{\eta}_j$ into the update of~$\boldsymbol{\xi}$, distinguishing this from a plug-in approach.

\emph{Proof.} Applying the CAVI rule~\eqref{eq:cavi_rule} to~$\boldsymbol{\xi}$ and collecting quadratic terms from $\mathbb{E}_{-\boldsymbol{\xi}}[\log p(\mathbf{y} \mid \boldsymbol{\xi}, \boldsymbol{\eta}, \sigma^2)]$ yields a Gaussian with the stated precision~\eqref{eq:Sigma_xi}. The second-moment matrix $\mathbb{E}_q[\tilde{\mathbf{x}}_t \tilde{\mathbf{x}}_t^\top]$ arises from completing the square in $\boldsymbol{\xi}$, where the off-diagonal entries couple $\alpha$ and each~$\beta_j$ through $\mathbb{E}_q[\tilde{x}_t^{(j)}]$, and the diagonal entries involve~\eqref{eq:second_moment}. The linear term in~$\boldsymbol{\xi}$ gives the mean~\eqref{eq:mu_xi}. \hfill$\square$

\subsection{Block 2: Weight parameters}

For each predictor $j = 1, \ldots, J$, define the partial residual
\begin{equation}\label{eq:partial_resid}
\bar{e}_t^{(-j)} = y_t - \mu_\alpha - \sum_{j' \neq j} \mu_{\beta_{j'}} \, \mathbb{E}_q[\tilde{x}_t^{(j')}] - \mu_{\beta_j} \, c_t^{(j)}.
\end{equation}

\begin{proposition}\label{prop:block2}
Under the mean-field factorization~\eqref{eq:meanfield}, the optimal variational distribution for the weight parameters of predictor~$j$ is $q^*(\boldsymbol{\eta}_j) = \mathcal{N}(\boldsymbol{\mu}_{\eta_j}, \boldsymbol{\Sigma}_{\eta_j})$ with
\begin{align}
\boldsymbol{\Sigma}_{\eta_j} &= \left( \mathbb{E}_q[\sigma^{-2}] \, \mathbb{E}_q[\beta_j^2] \sum_{t=1}^{T} \mathbf{r}_t^{(j)} \mathbf{r}_t^{(j)\top} + \sigma_\eta^{-2} \mathbf{I}_{P_j-1} \right)^{-1}, \label{eq:Sigma_eta} \\
\boldsymbol{\mu}_{\eta_j} &= \boldsymbol{\Sigma}_{\eta_j} \cdot \mathbb{E}_q[\sigma^{-2}] \left( \mu_{\beta_j} \sum_{t=1}^{T} \mathbf{r}_t^{(j)} \bar{e}_t^{(-j)} - \sum_{t=1}^{T} \mathbf{r}_t^{(j)} \, \mathbf{g}_t^\top \boldsymbol{\Sigma}_\xi \mathbf{e}_{j+1} \right), \label{eq:mu_eta}
\end{align}
where $\mathbf{g}_t = \mathbb{E}_q[\tilde{\mathbf{x}}_t]$, $\mathbf{e}_{j+1}$ is the $(j+1)$-th canonical basis vector, and $\mathbb{E}_q[\beta_j^2] = \mu_{\beta_j}^2 + (\boldsymbol{\Sigma}_\xi)_{j+1,j+1}$.
\end{proposition}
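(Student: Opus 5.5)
The plan is to apply the CAVI rule~\eqref{eq:cavi_rule} to the block $\boldsymbol{\eta}_j$ and take expectations with respect to $q(\boldsymbol{\xi})\prod_{j'\neq j}q(\boldsymbol{\eta}_{j'})\,q(\sigma^2)$. Since $\sigma^2$ enters the likelihood only through the factor $\sigma^{-2}$ multiplying the residual sum of squares, its expectation factors out as $\mathbb{E}_q[\sigma^{-2}]$. I would write the residual in~\eqref{eq:model_reparam} as
$$u_t - \beta_j\,\mathbf{r}_t^{(j)\top}\boldsymbol{\eta}_j,\qquad u_t = y_t - \alpha - \textstyle\sum_{j'\neq j}\beta_{j'}\tilde{x}_t^{(j')} - \beta_j c_t^{(j)}.$$
Expanding the square gives a quadratic term $\beta_j^2\,\boldsymbol{\eta}_j^\top \mathbf{r}_t^{(j)}\mathbf{r}_t^{(j)\top}\boldsymbol{\eta}_j$ and a linear term $-2\beta_j u_t\,\mathbf{r}_t^{(j)\top}\boldsymbol{\eta}_j$. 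Adding the log prior~\eqref{eq:prior_eta} leaves a quadratic form in $\boldsymbol{\eta}_j$. Hence $q^*(\boldsymbol{\eta}_j)$ is Gaussian, and the precision~\eqref{eq:Sigma_eta} follows at once from $\mathbb{E}_q[\beta_j^2]=\mu_{\beta_j}^2+(\boldsymbol{\Sigma}_\xi)_{j+1,j+1}$. This part is routine.

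For the mean, the key step is computing $\mathbb{E}_q[\beta_j u_t]$. Under the factorization~\eqref{eq:meanfield}, $\boldsymbol{\xi}$ is independent of the $\boldsymbol{\eta}_{j'}$. Therefore $\mathbb{E}_q[\beta_j\alpha]=\mu_{\beta_j}\mu_\alpha+(\boldsymbol{\Sigma}_\xi)_{1,j+1}$ and $\mathbb{E}_q[\beta_j\beta_{j'}\tilde{x}_t^{(j')}]=(\mu_{\beta_j}\mu_{\beta_{j'}}+(\boldsymbol{\Sigma}_\xi)_{j'+1,j+1})\,\mathbb{E}_q[\tilde{x}_t^{(j')}]$. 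Collecting the mean products reproduces $\mu_{\beta_j}\bar{e}_t^{(-j)}$ from~\eqref{eq:partial_resid}. The covariance pieces assemble into an inner product of the $(j+1)$-th column of $\boldsymbol{\Sigma}_\xi$ with the vector $(1,\mathbb{E}_q[\tilde{x}_t^{(1)}],\dots,\mathbb{E}_q[\tilde{x}_t^{(J)}])$, that is, $\mathbf{g}_t^\top\boldsymbol{\Sigma}_\xi\mathbf{e}_{j+1}$ with $\mathbf{g}_t$ as stated. The cross-block correction in~\eqref{eq:mu_eta} arises because $\boldsymbol{\xi}$ is kept as a joint block, which is why $(\boldsymbol{\Sigma}_\xi)_{1,j+1}$ and $(\boldsymbol{\Sigma}_\xi)_{j'+1,j+1}$ appear.

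The main obstacle is the $(j+1)$-th entry of $\mathbf{g}_t$. The term $-\beta_j^2c_t^{(j)}$ in $u_t$ contributes $(\boldsymbol{\Sigma}_\xi)_{j+1,j+1}\,c_t^{(j)}$. The stated formula instead uses $(\boldsymbol{\Sigma}_\xi)_{j+1,j+1}\,\mathbb{E}_q[\tilde{x}_t^{(j)}]$, which exceeds it by
$$(\boldsymbol{\Sigma}_\xi)_{j+1,j+1}\,\mathbf{r}_t^{(j)\top}\boldsymbol{\mu}_{\eta_j}.$$
In the statement, the vector $\mathbf{g}_t$ inside~\eqref{eq:mu_eta} is evaluated at the current iterate of $\boldsymbol{\mu}_{\eta_j}$, whereas the exact CAVI optimum has $c_t^{(j)}$ in that entry. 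So~\eqref{eq:mu_eta} as written is the exact optimum plus the extra term
$$-\,\boldsymbol{\Sigma}_{\eta_j}\,\mathbb{E}_q[\sigma^{-2}](\boldsymbol{\Sigma}_\xi)_{j+1,j+1}\textstyle\sum_t\mathbf{r}_t^{(j)}\mathbf{r}_t^{(j)\top}\boldsymbol{\mu}_{\eta_j}.$$
I would first try to absorb this term by rewriting the quadratic part. I expect that to fail: a fixed-point check shows~\eqref{eq:mu_eta} coincides with the exact optimum only when $(\boldsymbol{\Sigma}_\xi)_{j+1,j+1}\sum_t\mathbf{r}_t^{(j)}\mathbf{r}_t^{(j)\top}\boldsymbol{\mu}_{\eta_j}=\mathbf{0}$. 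This holds, for example, at $\boldsymbol{\mu}_{\eta_j}=\mathbf{0}$ or when the variance of $\beta_j$ is negligible.

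My plan is therefore to establish the exact CAVI mean, with $c_t^{(j)}$ in the $(j+1)$-th entry, and then verify the stated form only up to this explicitly identified term. I would not claim exact equality with~\eqref{eq:mu_eta}. The precision~\eqref{eq:Sigma_eta} and every other component of~\eqref{eq:mu_eta} are derived exactly as described above.
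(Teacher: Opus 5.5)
Your route is the same as the paper's proof in Online Appendix~A.1. You apply~\eqref{eq:cavi_rule}, split $\beta_j\mathbf{r}_t^{(j)\top}\boldsymbol{\eta}_j$ off the residual, read the precision~\eqref{eq:Sigma_eta} from the quadratic term, and get the linear term from $\mathbb{E}_q[\beta_j e_t^{(-j)}]$ using $\mathbb{E}_q[\boldsymbol{\xi}\boldsymbol{\xi}^\top]=\boldsymbol{\mu}_\xi\boldsymbol{\mu}_\xi^\top+\boldsymbol{\Sigma}_\xi$. Your objection to~\eqref{eq:mu_eta} is correct, and it points to a slip in the paper rather than a gap in your argument.

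The appendix writes $e_t=e_t^{(-j)}-\beta_j\mathbf{r}_t^{(j)\top}\boldsymbol{\eta}_j$, but then evaluates $\mathbb{E}_q[\beta_j e_t^{(-j)}]$ as $\mathbb{E}_q[\beta_j(y_t-\mathbf{g}_t^\top\boldsymbol{\xi})]$. The $(j+1)$-th entry of $\mathbf{g}_t$ is $c_t^{(j)}+\mathbf{r}_t^{(j)\top}\boldsymbol{\mu}_{\eta_j}$, so this puts the current iterate of $\boldsymbol{\eta}_j$ back into $e_t^{(-j)}$, which by construction contains only $\beta_j c_t^{(j)}$. The appendix's last equality is also inconsistent with the line before it, because in fact $\mu_{\beta_j}(y_t-\mathbf{g}_t^\top\boldsymbol{\mu}_\xi)=\mu_{\beta_j}\bar{e}_t^{(-j)}-\mu_{\beta_j}^2\,\mathbf{r}_t^{(j)\top}\boldsymbol{\mu}_{\eta_j}$. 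Dropping that term amounts to using $c_t^{(j)}$ in the mean-product part while keeping $\mathbb{E}_q[\tilde{x}_t^{(j)}]$ in the covariance part. The net discrepancy is exactly the term you isolated, $-\boldsymbol{\Sigma}_{\eta_j}\mathbb{E}_q[\sigma^{-2}](\boldsymbol{\Sigma}_\xi)_{j+1,j+1}\sum_t\mathbf{r}_t^{(j)}\mathbf{r}_t^{(j)\top}\boldsymbol{\mu}_{\eta_j}$.

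So the statement as written cannot be proved, and you are right not to assert equality with~\eqref{eq:mu_eta}. The exact optimum replaces $\mathbf{g}_t$ in the covariance correction by $\mathbf{g}_t-(\mathbf{r}_t^{(j)\top}\boldsymbol{\mu}_{\eta_j})\,\mathbf{e}_{j+1}$, which is what your plan derives.

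Your fixed-point remark also checks out. Holding the other blocks fixed, a fixed point of the stated update solves the exact normal equations with $\mathbb{E}_q[\beta_j^2]$ replaced by $\mu_{\beta_j}^2+2(\boldsymbol{\Sigma}_\xi)_{j+1,j+1}$, so the variance of $\beta_j$ is counted twice.

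One consequence is worth recording. Proposition~\ref{prop:convergence} relies on each block update being the exact coordinate maximizer of the ELBO. Its monotonicity argument therefore covers your corrected update, but not~\eqref{eq:mu_eta} as stated.
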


Three features of this update merit emphasis. \emph{First}, the precision~\eqref{eq:Sigma_eta} is scaled by $\mathbb{E}_q[\beta_j^2]$, which includes the \textbf{variance} of~$\beta_j$. When $\beta_j$ is close to zero, the prior dominates and weights revert to the baseline---automatic regularization of irrelevant predictors.

\emph{Second}, the mean~\eqref{eq:mu_eta} contains a \textbf{covariance correction} term $-\sum_t \mathbf{r}_t^{(j)} \mathbf{g}_t^\top \boldsymbol{\Sigma}_\xi \mathbf{e}_{j+1}$, arising because~$\beta_j$ is jointly distributed with other components of~$\boldsymbol{\xi}$. Omitting this correction would ignore the benefits of the joint $(\alpha, \boldsymbol{\beta})$ treatment.

\emph{Third}, the matrix inversion in~\eqref{eq:Sigma_eta} has dimension $(P_j - 1) \times (P_j - 1)$, typically 2 or 3, making the per-predictor update negligible.

\emph{Proof.} See Online Appendix~A.1.

\subsection{Block 3: Error variance}

\begin{proposition}\label{prop:block3}
Under the mean-field factorization, the optimal variational distribution for the error variance is $q^*(\sigma^2) = \mathrm{Inverse\text{-}Gamma}(\tilde{a}, \tilde{b})$ with
\begin{equation}\label{eq:sigma_update}
\tilde{a} = a_0 + T/2, \qquad \tilde{b} = b_0 + \tfrac{1}{2} \sum_{t=1}^{T} \mathbb{E}_q[e_t^2],
\end{equation}
where $\mathbb{E}_q[e_t^2] = y_t^2 - 2 y_t \mathbb{E}_q[\tilde{\mathbf{x}}_t]^\top \boldsymbol{\mu}_\xi + \mathrm{tr}\!\left(\mathbb{E}_q[\tilde{\mathbf{x}}_t \tilde{\mathbf{x}}_t^\top] \cdot (\boldsymbol{\mu}_\xi \boldsymbol{\mu}_\xi^\top + \boldsymbol{\Sigma}_\xi)\right)$.
\end{proposition}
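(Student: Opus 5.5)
We apply the CAVI rule~\eqref{eq:cavi_rule} to the block $\sigma^2$, taking the expectation of the log joint density over $q(\boldsymbol{\xi})\prod_j q(\boldsymbol{\eta}_j)$. The first step is to isolate the terms of $\log p(\mathbf{y},\boldsymbol{\xi},\boldsymbol{\eta},\sigma^2)$ that depend on $\sigma^2$. There are only two sources. The Gaussian likelihood contributes $-\tfrac{T}{2}\log\sigma^2 - \tfrac{1}{2\sigma^2}\sum_t e_t^2$, where $e_t = y_t - \tilde{\mathbf{x}}_t^\top\boldsymbol{\xi}$ and $\tilde{\mathbf{x}}_t$ depends on $\boldsymbol{\eta}$. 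The Inverse-Gamma$(a_0,b_0)$ prior contributes $-(a_0+1)\log\sigma^2 - b_0/\sigma^2$. The priors on $\boldsymbol{\xi}$ and $\boldsymbol{\eta}_j$ do not involve $\sigma^2$ and are absorbed into the constant.

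Taking $\mathbb{E}_{-\sigma^2}$ leaves the log-terms unchanged and replaces $\sum_t e_t^2$ by $\sum_t \mathbb{E}_q[e_t^2]$. This gives
\begin{equation*}
\log q^*(\sigma^2) = -(a_0 + T/2 + 1)\log\sigma^2 - \Bigl(b_0 + \tfrac12\textstyle\sum_t \mathbb{E}_q[e_t^2]\Bigr)\sigma^{-2} + \text{const}.
\end{equation*}
This is the kernel of an Inverse-Gamma$(\tilde a,\tilde b)$ density with the stated parameters~\eqref{eq:sigma_update}. It is proper as long as $\tilde b>0$, which holds because $b_0>0$ and $\mathbb{E}_q[e_t^2]\ge 0$.

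The remaining work, and the only step with any content, is computing $\mathbb{E}_q[e_t^2]$ under the factorization~\eqref{eq:meanfield}. Expand $e_t^2 = y_t^2 - 2y_t\tilde{\mathbf{x}}_t^\top\boldsymbol{\xi} + \boldsymbol{\xi}^\top\tilde{\mathbf{x}}_t\tilde{\mathbf{x}}_t^\top\boldsymbol{\xi}$. Because $\tilde{\mathbf{x}}_t$ is a function of $\boldsymbol{\eta}$ only, and $q$ makes $\boldsymbol{\xi}$ independent of $\boldsymbol{\eta}$, each expectation factorizes:
\begin{itemize}
\item the cross term becomes $\mathbb{E}_q[\tilde{\mathbf{x}}_t]^\top\boldsymbol{\mu}_\xi$;
\item the quadratic term becomes $\mathrm{tr}\bigl(\mathbb{E}_q[\tilde{\mathbf{x}}_t\tilde{\mathbf{x}}_t^\top]\,\mathbb{E}_q[\boldsymbol{\xi}\boldsymbol{\xi}^\top]\bigr)$, using the cyclic trace identity;
\item $\mathbb{E}_q[\boldsymbol{\xi}\boldsymbol{\xi}^\top] = \boldsymbol{\mu}_\xi\boldsymbol{\mu}_\xi^\top + \boldsymbol{\Sigma}_\xi$ by Proposition~\ref{prop:block1}.
\end{itemize}

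Finally, we check that $\mathbb{E}_q[\tilde{\mathbf{x}}_t\tilde{\mathbf{x}}_t^\top]$ is the same matrix used in Proposition~\ref{prop:block1}, so the update is computable from available quantities:
\begin{itemize}
\item diagonal entries are given by~\eqref{eq:second_moment};
\item off-diagonal entries for $j\neq j'$ equal $\mathbb{E}_q[\tilde x_t^{(j)}]\,\mathbb{E}_q[\tilde x_t^{(j')}]$, because $\boldsymbol{\eta}_j$ and $\boldsymbol{\eta}_{j'}$ are independent under $q$;
\item the intercept row is $\mathbb{E}_q[\tilde{\mathbf{x}}_t]$.
\end{itemize}
The main obstacle is not analytic. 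It is only making sure that independence across the $\boldsymbol{\eta}_j$ blocks is invoked correctly for the off-diagonal second moments.
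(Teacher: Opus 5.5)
Your proposal is correct and follows the paper's proof essentially step for step. You collect the $\sigma^2$-dependent terms from the Gaussian likelihood and the Inverse-Gamma prior, read off the Inverse-Gamma kernel with $\tilde a = a_0 + T/2$ and $\tilde b = b_0 + \tfrac12\sum_t \mathbb{E}_q[e_t^2]$, and expand $\mathbb{E}_q[e_t^2]$ using $\mathbb{E}_q[\boldsymbol{\xi}\boldsymbol{\xi}^\top] = \boldsymbol{\mu}_\xi\boldsymbol{\mu}_\xi^\top + \boldsymbol{\Sigma}_\xi$, as the paper does. Your extra details make explicit what the paper leaves implicit: the independence of $\boldsymbol{\xi}$ and $\boldsymbol{\eta}$ under $q$, the cyclic trace identity, and the off-diagonal entries of $\mathbb{E}_q[\tilde{\mathbf{x}}_t\tilde{\mathbf{x}}_t^\top]$ via independence across the $\boldsymbol{\eta}_j$.
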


\emph{Proof.} Applying the CAVI rule~\eqref{eq:cavi_rule} to~$\sigma^2$, the expected log-joint collects all terms involving~$\sigma^2$: the log-likelihood contributes $-\frac{T}{2}\log\sigma^2 - \frac{1}{2\sigma^2}\sum_t \mathbb{E}_q[e_t^2]$, and the Inverse-Gamma prior contributes $-(a_0+1)\log\sigma^2 - b_0/\sigma^2$. Combining yields the kernel of an Inverse-Gamma with the stated parameters~\eqref{eq:sigma_update}. The expected squared residual $\mathbb{E}_q[e_t^2]$ expands using $\mathbb{E}_q[\boldsymbol{\xi}\boldsymbol{\xi}^\top] = \boldsymbol{\mu}_\xi\boldsymbol{\mu}_\xi^\top + \boldsymbol{\Sigma}_\xi$. \hfill$\square$

The moments required by other blocks are $\mathbb{E}_q[\sigma^{-2}] = \tilde{a}/\tilde{b}$ and $\mathbb{E}_q[\log \sigma^2] = \log \tilde{b} - \psi(\tilde{a})$.

\subsection{The CAVI algorithm}

\begin{algorithm}[H]
\SetAlgoLined
\KwIn{Data $\{\mathbf{y}, \mathbf{x}_t^{(j)}\}$; basis matrices $\{\boldsymbol{\Phi}_j\}$; hyperparameters; tolerance $\varepsilon$}
\KwOut{Variational parameters $\boldsymbol{\mu}_\xi, \boldsymbol{\Sigma}_\xi$; $\{\boldsymbol{\mu}_{\eta_j}, \boldsymbol{\Sigma}_{\eta_j}\}$; $\tilde{a}, \tilde{b}$; ELBO trace}
\textbf{Preprocessing:} Compute $\boldsymbol{\theta}_j^0$, $\mathbf{N}_j$, $c_t^{(j)}$, $\mathbf{r}_t^{(j)}$ for all $j, t$\;
\textbf{Initialize:} $\boldsymbol{\mu}_\xi$ from OLS with uniform weights (see Appendix~A.4); $\boldsymbol{\mu}_{\eta_j} = \mathbf{0}$; $\tilde{a} = a_0 + T/2$\;
\Repeat{$|\Delta \mathcal{L}| / |\mathcal{L}| < \varepsilon$}{
  \For{$j = 1, \ldots, J$}{
    Update $q^*(\boldsymbol{\eta}_j)$ via \eqref{eq:Sigma_eta}--\eqref{eq:mu_eta}\;
  }
  Update $q^*(\boldsymbol{\xi})$ via \eqref{eq:Sigma_xi}--\eqref{eq:mu_xi}\;
  Update $q^*(\sigma^2)$ via \eqref{eq:sigma_update}\;
  Compute $\mathcal{L}(q)$ via \eqref{eq:elbo}\;
}
\caption{CAVI for Bayesian MIDAS Regression}
\label{alg:cavi}
\end{algorithm}

\begin{proposition}[Monotone convergence]\label{prop:convergence}
The ELBO sequence $\{\mathcal{L}^{(t)}\}$ generated by Algorithm~\ref{alg:cavi} satisfies $\mathcal{L}^{(t+1)} \geq \mathcal{L}^{(t)}$ for all~$t$, and converges to a fixed point.
\end{proposition}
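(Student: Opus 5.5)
The argument is the standard one for coordinate ascent: each step of Algorithm~\ref{alg:cavi} exactly maximizes the ELBO over one factor with the others held fixed. Monotonicity then follows, and convergence of the sequence follows from boundedness. Write $\mathcal{L}(q)=\mathcal{L}(q_\xi,q_{\eta_1},\dots,q_{\eta_J},q_{\sigma^2})$. For a single block $m$, I will use the identity
\[
\mathcal{L}(q)=-\mathrm{KL}\bigl(q_m \,\|\, \tilde p_m\bigr)+\text{const}_{-m},
\]
where $\log\tilde p_m(\boldsymbol{\psi}_m)=\mathbb{E}_{-m}[\log p(\mathbf{y},\boldsymbol{\psi})]-\log Z_m$ is normalized. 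This is obtained by writing $\mathcal{L}=\mathbb{E}_{q_m}\bigl[\mathbb{E}_{-m}\log p\bigr]-\mathbb{E}_{q_m}\log q_m-\sum_{m'\neq m}\mathbb{E}\log q_{m'}$. The KL term is minimized, and equals zero, exactly at $q_m=\tilde p_m$, which is the rule~\eqref{eq:cavi_rule}. So each block update weakly increases $\mathcal{L}$.

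Propositions~\ref{prop:block1}--\ref{prop:block3} identify $\tilde p_m$ in closed form (Gaussian, Gaussian, Inverse-Gamma). The updates in Algorithm~\ref{alg:cavi} are therefore exact coordinate maximizers, not approximations. For this to hold, $Z_m$ must be finite at each step, i.e.\ each $\tilde p_m$ must be a proper distribution. The precisions in~\eqref{eq:Sigma_xi} and~\eqref{eq:Sigma_eta} are positive definite because the prior precisions $\boldsymbol{\Lambda}_\xi$ and $\sigma_\eta^{-2}\mathbf{I}$ are positive definite and the data terms are positive semidefinite; $\mathbb{E}_q[\tilde{\mathbf{x}}_t\tilde{\mathbf{x}}_t^\top]$ is a second-moment matrix. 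Also $\tilde a,\tilde b>0$ since $a_0,b_0>0$ and $\mathbb{E}_q[e_t^2]\ge 0$. Composing the $J+2$ block updates of one sweep gives a chain of inequalities, hence $\mathcal{L}^{(t+1)}\ge\mathcal{L}^{(t)}$.

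Since $\mathcal{L}(q)\le\log p(\mathbf{y})$ by~\eqref{eq:elbo}, the sequence converges provided $p(\mathbf{y})<\infty$. I would verify this by integrating out $\sigma^2$ and $\boldsymbol{\xi}$ analytically given $\boldsymbol{\eta}$, using the proper Gaussian and Inverse-Gamma priors. The result is a bounded function of $\boldsymbol{\eta}$, since the marginal likelihood of a Gaussian linear model with a proper conjugate prior is bounded by a constant depending on $a_0,b_0$ and the $\boldsymbol{\xi}$ prior. Integrating this against the proper prior on $\boldsymbol{\eta}$ gives a finite value. A bounded monotone sequence converges.

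The main obstacle is the phrase ``converges to a fixed point'', which is stronger than convergence of ELBO values. For that I plan to show that the variational parameters stay in a compact set. $\boldsymbol{\Sigma}_\xi$ and $\boldsymbol{\Sigma}_{\eta_j}$ are bounded above by the prior covariances. The means are then bounded via boundedness of the ELBO from below along the trajectory, since the ELBO tends to $-\infty$ as the means diverge, owing to the Gaussian prior terms. Given compactness, the update map is continuous and each block maximizer is unique by strict concavity of $-\mathrm{KL}$ in the natural parameters. A Zangwill-type argument, or the standard block-coordinate-ascent result with unique blockwise maximizers, then shows every limit point is a fixed point of the sweep. I would state the conclusion in that form, noting that convergence of the full parameter sequence to a single point need not hold without further assumptions, and possibly only to a local optimum given the nonconvexity induced by the bilinear term.
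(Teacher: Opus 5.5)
Your first two paragraphs follow the paper's proof and make it explicit. The paper says only three things: each block update maximizes $\mathcal{L}$ with the other factors fixed, $\mathcal{L}\le\log p(\mathbf{y})$, and the rest follows from Blei et al.\ (2017). It leaves implicit that each optimal factor is proper and that $p(\mathbf{y})<\infty$, and you check both. The second is easier than your route. The Gaussian likelihood is at most $(2\pi\sigma^2)^{-T/2}$ whatever $\boldsymbol{\xi},\boldsymbol{\eta}$ are, so $p(\mathbf{y})\le(2\pi)^{-T/2}\,\mathbb{E}[\sigma^{-T}]<\infty$ under the Inverse-Gamma$(a_0,b_0)$ prior. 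No conjugacy is needed, which matters because the prior on $\boldsymbol{\xi}$ here is independent of $\sigma^2$, not the conjugate one.

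You genuinely go beyond the paper on the clause ``converges to a fixed point''. The paper's argument gives only convergence of the scalar sequence $\mathcal{L}^{(t)}$ and says nothing about the variational parameters. Your compactness-plus-Zangwill argument shows that every limit point of the parameter sequence is a fixed point of the sweep, and you rightly claim no more. The paper's version buys brevity; yours actually proves the second half of the statement, in the form in which it holds without extra assumptions. You can get closer to the literal claim: your KL identity shows each block step raises $\mathcal{L}$ by exactly $\mathrm{KL}(q_m^{\mathrm{old}}\,\|\,q_m^{\mathrm{new}})$. These increments are summable, so on your compact set successive iterates become arbitrarily close and the limit set is connected. Hence the whole sequence converges whenever fixed points are isolated.

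Three repairs are needed.
\begin{itemize}
\item Uniqueness of the blockwise maximizer does not come from strict concavity of $-\mathrm{KL}$ in the natural parameters. $\mathrm{KL}(q_\lambda\|\tilde p_m)$ is a Bregman divergence with $\lambda$ in its second argument, so it is not convex in $\lambda$ in general. For $q=\mathcal{N}(0,s)$ against $\mathcal{N}(0,1)$ it is concave in $\lambda=-1/(2s)$ when $s<1/2$. It is convex in the mean parameters, and uniqueness follows directly from Gibbs' inequality in your own KL identity.
\item For compactness you also need the covariances bounded away from singularity and $\tilde b$ bounded above. Both follow once the means are bounded: $\tilde b\ge b_0$ gives $\mathbb{E}_q[\sigma^{-2}]\le\tilde a/b_0$, hence bounded precisions.
\item Your claim that the updates are exact coordinate maximizers holds for the $\boldsymbol{\eta}_j$-step as derived in Appendix~A.1. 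There, the vector multiplying $\boldsymbol{\Sigma}_\xi\mathbf{e}_{j+1}$ must have $(j{+}1)$-th entry $c_t^{(j)}$, so that $y_t-\mathbf{g}_t^\top\boldsymbol{\mu}_\xi=\bar e_t^{(-j)}$. If $\mathbf{g}_t=\mathbb{E}_q[\tilde{\mathbf{x}}_t]$ is read literally as in Proposition~\ref{prop:block2}, the update acquires a term in the old $\boldsymbol{\mu}_{\eta_j}$. It is then no longer the blockwise optimum, and monotonicity would not follow. The paper's proof depends on the same point.
\end{itemize}
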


\emph{Proof.} Each coordinate update maximizes the ELBO over the corresponding factor while holding all other factors fixed. Since the ELBO is bounded above by $\log p(\mathbf{y})$, the monotonically non-decreasing sequence converges. This is a standard property of CAVI with exponential family conditionals; see \citet{blei2017}, Section~2.4. \hfill$\square$

Across 10,500 CAVI fits in our Monte Carlo study, no ELBO violation was observed.

\subsection{Computational complexity}

The dominant cost per iteration is the $(J+1) \times (J+1)$ matrix inversion in Step~2, which is $O(J^3)$. The $J$ inversions in Step~1 are each $(P_j - 1) \times (P_j - 1)$---negligible for $P_j \leq 4$. Convergence requires 3--70 iterations (3 for $J=1$, up to 70 for $J=50$). With $T=200$, $J=50$, $P=3$, a complete CAVI run takes approximately 1.4 seconds on a standard laptop.

\subsection{Mean-field underdispersion}\label{sec:underdispersion}

The mean-field factorization ignores $\mathrm{Cov}(\beta_j, \boldsymbol{\eta}_j)$. This produces variational marginals more concentrated than the true posterior---underdispersion \citep{blei2017}. The effect is \textbf{asymmetric}: the impact coefficients in~$\boldsymbol{\xi}$ are correlated with all $J$ weight vectors simultaneously, while each~$\boldsymbol{\eta}_j$ is primarily correlated with its own~$\beta_j$. As $J$ grows, cumulative ignored correlations increase for~$\boldsymbol{\xi}$ but remain approximately constant for each~$\boldsymbol{\eta}_j$, explaining why coverage degrades for~$\beta$ but not for~$\boldsymbol{\eta}$ (Section~4).

A partial mitigation operates through the second-moment coupling: $\mathbb{E}_q[\beta_j^2]$ in~\eqref{eq:Sigma_eta} propagates the uncertainty of~$\beta_j$ into the calibration of~$\boldsymbol{\eta}_j$. No analogous mechanism compensates~$\beta_j$ at the individual coefficient level.

\section{Monte Carlo Simulation Study}

\subsection{Design}

We evaluate CAVI against a block Gibbs sampler across 21 DGP configurations organized in three tiers, with 500 replications each (10,500 CAVI fits; 9,500 Gibbs fits---the Gibbs sampler is omitted for $J \in \{25, 50\}$ due to prohibitive computation time). Tier~1 varies $J \in \{1,3,5,10,25,50\}$ and $T \in \{50,100,200,400\}$; Tier~2 examines weight profiles, signal-to-noise ratios, basis functions, and frequency ratios; Tier~3 contains stress tests. For configurations with $J > 1$, half the impact coefficients are zero.

\subsection{Main results: scalability in $J$}

Table~\ref{tab:scalability} reports results as $J$ increases from 1 to 50, with $T=200$, $K=9$, $P=3$.

\begin{table}[H]
\centering
\caption{CAVI versus Gibbs across number of predictors $J$. Metrics averaged over active $\beta_j$, 500 replications (50 for $J=25$ Gibbs). Standard errors in parentheses. $T=200$, $K=9$, $P=3$.}
\label{tab:scalability}
\footnotesize
\setlength{\tabcolsep}{3.5pt}
\begin{tabular}{@{}cccccccccc@{}}
\toprule
$J$ & Method & Bias($\beta$) & RMSE($\beta$) & Cov95($\beta$) & Bias($\eta$) & Cov95($\eta$) & Time (s) & Speedup & ESS \\
\midrule
1 & CAVI  & 0.029 \tiny{(.007)} & 0.158 \tiny{(.004)} & 0.894 \tiny{(.014)} & 0.001 \tiny{(.001)} & 0.929 \tiny{(.011)} & 0.001 & 1,772$\times$ & --- \\
1 & Gibbs & 0.032 \tiny{(.007)} & 0.159 \tiny{(.004)} & 0.934 \tiny{(.011)} & 0.002 \tiny{(.001)} & 0.936 \tiny{(.011)} & 1.82  & --- & 4,103 \\
3 & CAVI  & 0.056 \tiny{(.004)} & 0.173 \tiny{(.003)} & 0.836 \tiny{(.017)} & 0.005 \tiny{(.002)} & 0.945 \tiny{(.010)} & 0.006 & 645$\times$ & --- \\
3 & Gibbs & 0.078 \tiny{(.004)} & 0.187 \tiny{(.003)} & 0.903 \tiny{(.013)} & 0.008 \tiny{(.002)} & 0.957 \tiny{(.009)} & 3.59  & --- & 1,066 \\
5 & CAVI  & 0.087 \tiny{(.003)} & 0.177 \tiny{(.002)} & 0.594 \tiny{(.022)} & 0.030 \tiny{(.003)} & 0.953 \tiny{(.009)} & 0.023 & 283$\times$ & --- \\
5 & Gibbs & 0.103 \tiny{(.003)} & 0.181 \tiny{(.002)} & 0.846 \tiny{(.016)} & 0.032 \tiny{(.002)} & 0.974 \tiny{(.007)} & 6.54  & --- & 701 \\
10 & CAVI & 0.079 \tiny{(.003)} & 0.178 \tiny{(.002)} & 0.602 \tiny{(.022)} & 0.050 \tiny{(.003)} & 0.963 \tiny{(.010)} & 0.066 & 238$\times$ & --- \\
10 & Gibbs & 0.096 \tiny{(.003)} & 0.181 \tiny{(.002)} & 0.850 \tiny{(.016)} & 0.047 \tiny{(.002)} & 0.986 \tiny{(.007)} & 15.56 & --- & 539 \\
25 & CAVI & 0.086 \tiny{(.003)} & 0.179 \tiny{(.002)} & 0.581 \tiny{(.022)} & 0.073 \tiny{(.003)} & 0.971 \tiny{(.010)} & 0.322 & 107$\times$ & --- \\
25 & Gibbs & 0.159 & 0.186 & 0.828 & --- & --- & 34.44 & --- & 134 \\
50 & CAVI & 0.103 \tiny{(.003)} & 0.196 \tiny{(.003)} & 0.550 \tiny{(.022)} & 0.078 \tiny{(.002)} & 0.980 \tiny{(.009)} & 1.383 & --- & --- \\
\bottomrule
\end{tabular}
\end{table}

\textbf{Point estimates.} Bias and RMSE are comparable between CAVI and Gibbs across all configurations where both are available. The difference in bias is at most 0.03, confirming the variational posterior mean as a reliable point estimator.

\textbf{Speed.} Speedup ranges from $107\times$ ($J=25$) to $1{,}772\times$ ($J=1$), as visualized in Figure~\ref{fig:speedup}. The advantage grows with dimension: at $J=25$, CAVI completes in 0.3 seconds versus 34 seconds for Gibbs. At $J=50$, CAVI completes in 1.4 seconds while the Gibbs sampler would require an estimated 60+ seconds (Figure~\ref{fig:timing}).

\textbf{Coverage of $\boldsymbol{\eta}$.} Coverage remains \textbf{above 92\%} across all configurations, reaching 98\% at $J=50$. MIDAS weight profiles estimated by CAVI are well-calibrated.

\textbf{Coverage of $\beta$.} Coverage declines from 89.4\% ($J=1$) to 55.0\% ($J=50$), concentrated at $J \geq 5$. Two contextual observations: (i) Gibbs itself achieves only 85--93\% rather than 95\%, reflecting the challenging posterior geometry; (ii) the coverage gap reflects narrower intervals, not biased centers.

\textbf{Gibbs ESS degradation.} The minimum ESS declines from 4,103 to 539 as $J$ grows from 1 to 10, and further to 134 at $J=25$ (Figure~\ref{fig:ess}), indicating that fewer than 5\% of draws are effectively independent at $J=25$.

\begin{figure}[t]
\centering
\includegraphics[width=0.65\textwidth]{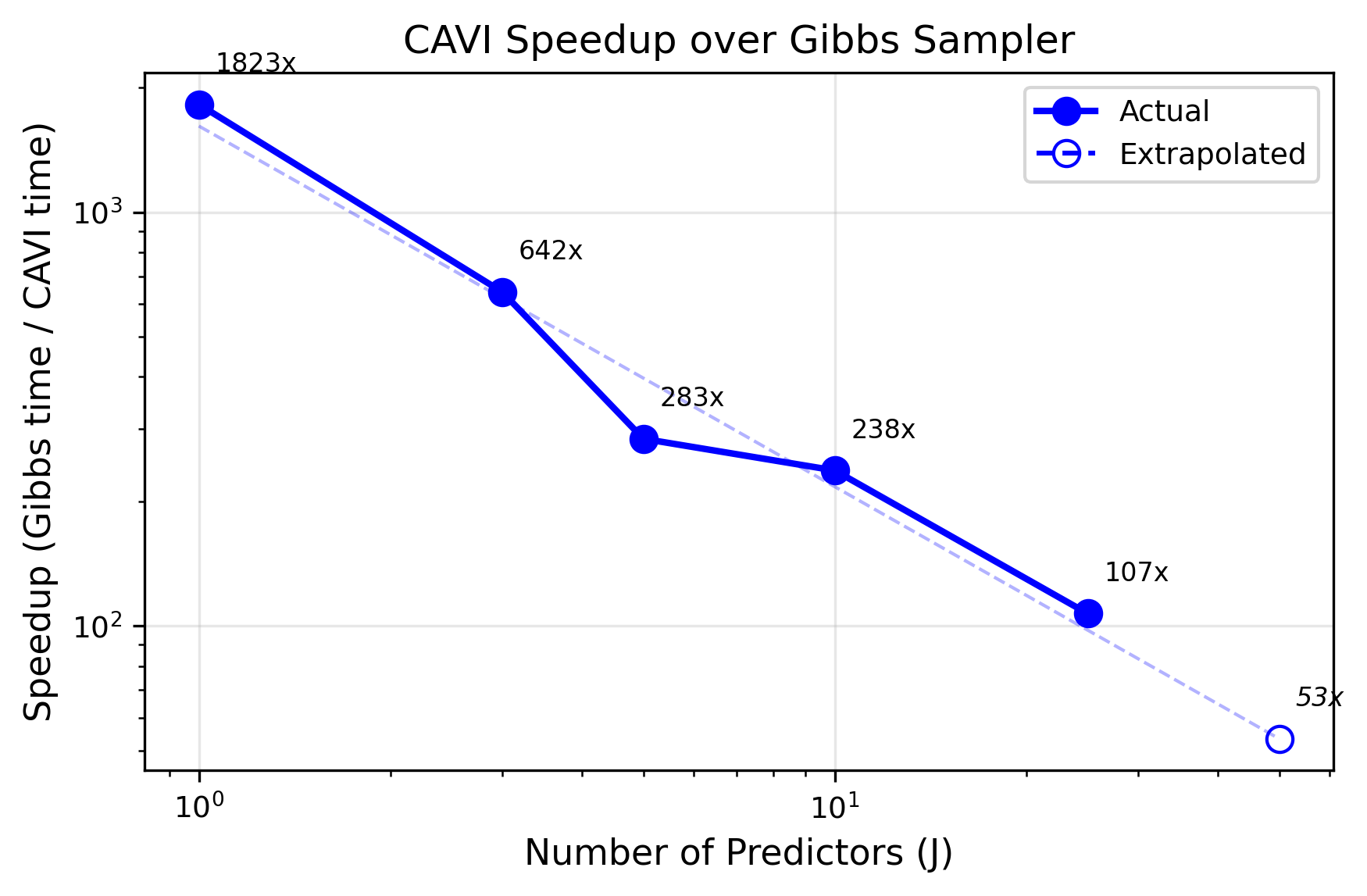}
\caption{Computational speedup of CAVI over Gibbs sampler as a function of the number of predictors~$J$. Solid markers: measured. Open markers: extrapolated from Gibbs scaling pattern. Even at $J=50$, CAVI maintains a speedup of approximately $53\times$ (extrapolated).}
\label{fig:speedup}
\end{figure}

\begin{figure}[t]
\centering
\includegraphics[width=0.65\textwidth]{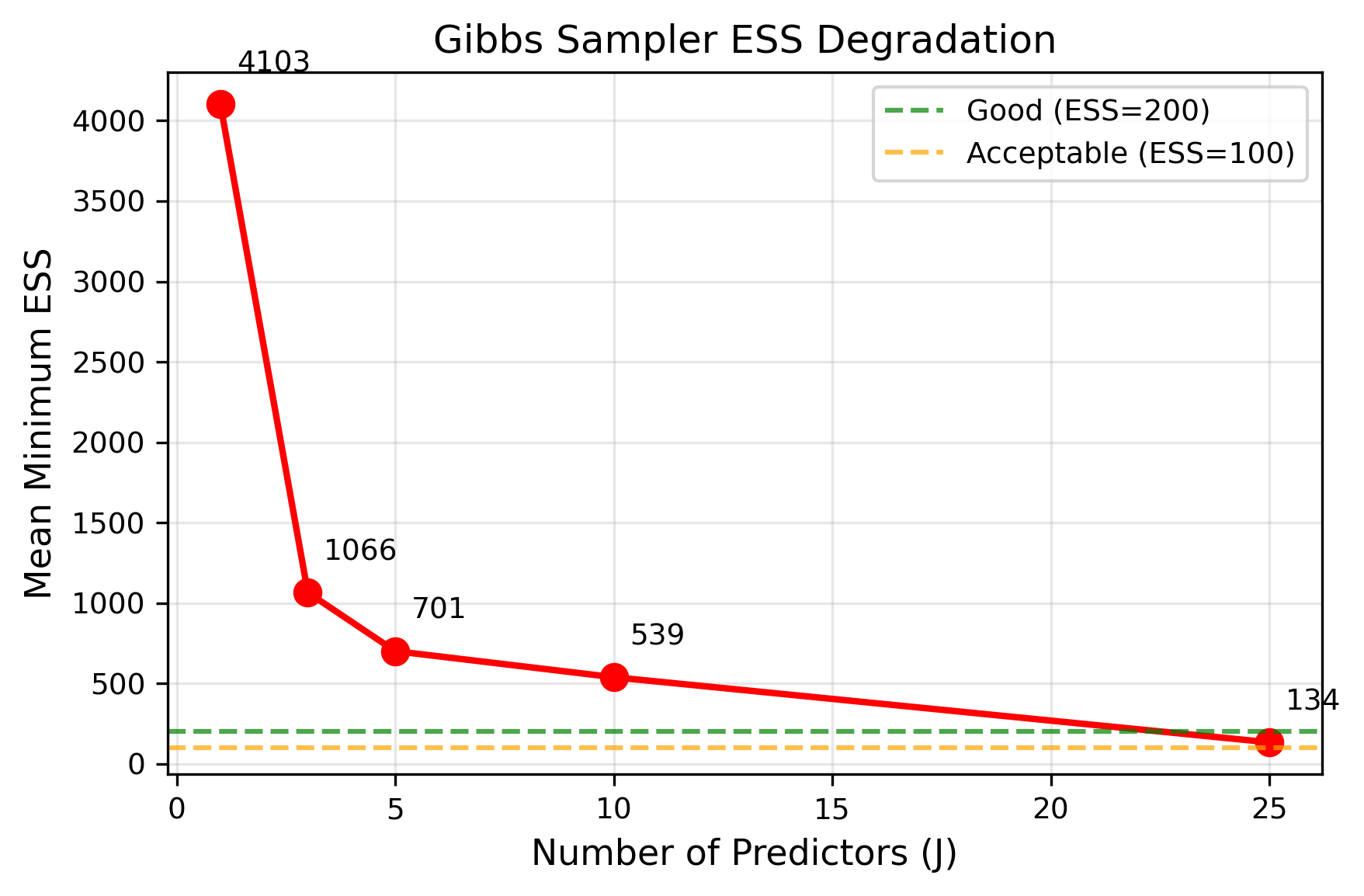}
\caption{Gibbs sampler minimum effective sample size (ESS) as a function of~$J$. The decline from 4,103 ($J=1$) to 539 ($J=10$) and further to 134 ($J=25$, 50 replications) indicates increasing autocorrelation in the MCMC chain as bilinear coupling involves more blocks.}
\label{fig:ess}
\end{figure}

\subsection{Main results: sample size $T$}

Table~\ref{tab:samplesize} reports results as $T$ varies from 50 to 400, with $J=3$.

\begin{table}[H]
\centering
\caption{CAVI versus Gibbs across sample size $T$. 500 replications. Standard errors in parentheses. $J=3$, $K=9$, $P=3$.}
\label{tab:samplesize}
\footnotesize
\setlength{\tabcolsep}{3.5pt}
\begin{tabular}{@{}cccccccccc@{}}
\toprule
$T$ & Method & Bias($\beta$) & RMSE($\beta$) & Cov95($\beta$) & Bias($\eta$) & Cov95($\eta$) & Time (s) & Speedup & ESS \\
\midrule
50  & CAVI  & 0.216 \tiny{(.009)} & 0.425 \tiny{(.007)} & 0.584 \tiny{(.022)} & 0.023 \tiny{(.003)} & 0.931 \tiny{(.011)} & 0.015 & 303$\times$ & --- \\
50  & Gibbs & 0.278 \tiny{(.009)} & 0.444 \tiny{(.007)} & 0.807 \tiny{(.018)} & 0.034 \tiny{(.002)} & 0.977 \tiny{(.007)} & 4.47  & --- & 681 \\
100 & CAVI  & 0.130 \tiny{(.006)} & 0.274 \tiny{(.004)} & 0.729 \tiny{(.020)} & 0.010 \tiny{(.003)} & 0.950 \tiny{(.010)} & 0.011 & 416$\times$ & --- \\
100 & Gibbs & 0.170 \tiny{(.006)} & 0.297 \tiny{(.004)} & 0.860 \tiny{(.016)} & 0.009 \tiny{(.002)} & 0.975 \tiny{(.007)} & 4.39  & --- & 794 \\
200 & CAVI  & 0.056 \tiny{(.004)} & 0.173 \tiny{(.003)} & 0.836 \tiny{(.017)} & 0.005 \tiny{(.002)} & 0.945 \tiny{(.010)} & 0.006 & 645$\times$ & --- \\
200 & Gibbs & 0.078 \tiny{(.004)} & 0.187 \tiny{(.003)} & 0.903 \tiny{(.013)} & 0.008 \tiny{(.002)} & 0.957 \tiny{(.009)} & 3.59  & --- & 1,066 \\
400 & CAVI  & 0.028 \tiny{(.003)} & 0.118 \tiny{(.002)} & 0.869 \tiny{(.015)} & 0.003 \tiny{(.001)} & 0.932 \tiny{(.011)} & 0.003 & 1,504$\times$ & --- \\
400 & Gibbs & 0.036 \tiny{(.003)} & 0.125 \tiny{(.002)} & 0.929 \tiny{(.012)} & 0.009 \tiny{(.001)} & 0.940 \tiny{(.011)} & 4.60  & --- & 2,022 \\
\bottomrule
\end{tabular}
\end{table}

CAVI coverage on~$\beta$ increases from 58.4\% ($T=50$) to 86.9\% ($T=400$), approaching the Gibbs benchmark. At $T=400$ the gap narrows to 6 percentage points. The speedup is largest when $T$ is large: $1{,}504\times$ at $T=400$, because CAVI converges in only 5 iterations while Gibbs remains at 5,000 draws. Figure~\ref{fig:coverage} visualizes the coverage convergence.

\begin{figure}[t]
\centering
\includegraphics[width=0.65\textwidth]{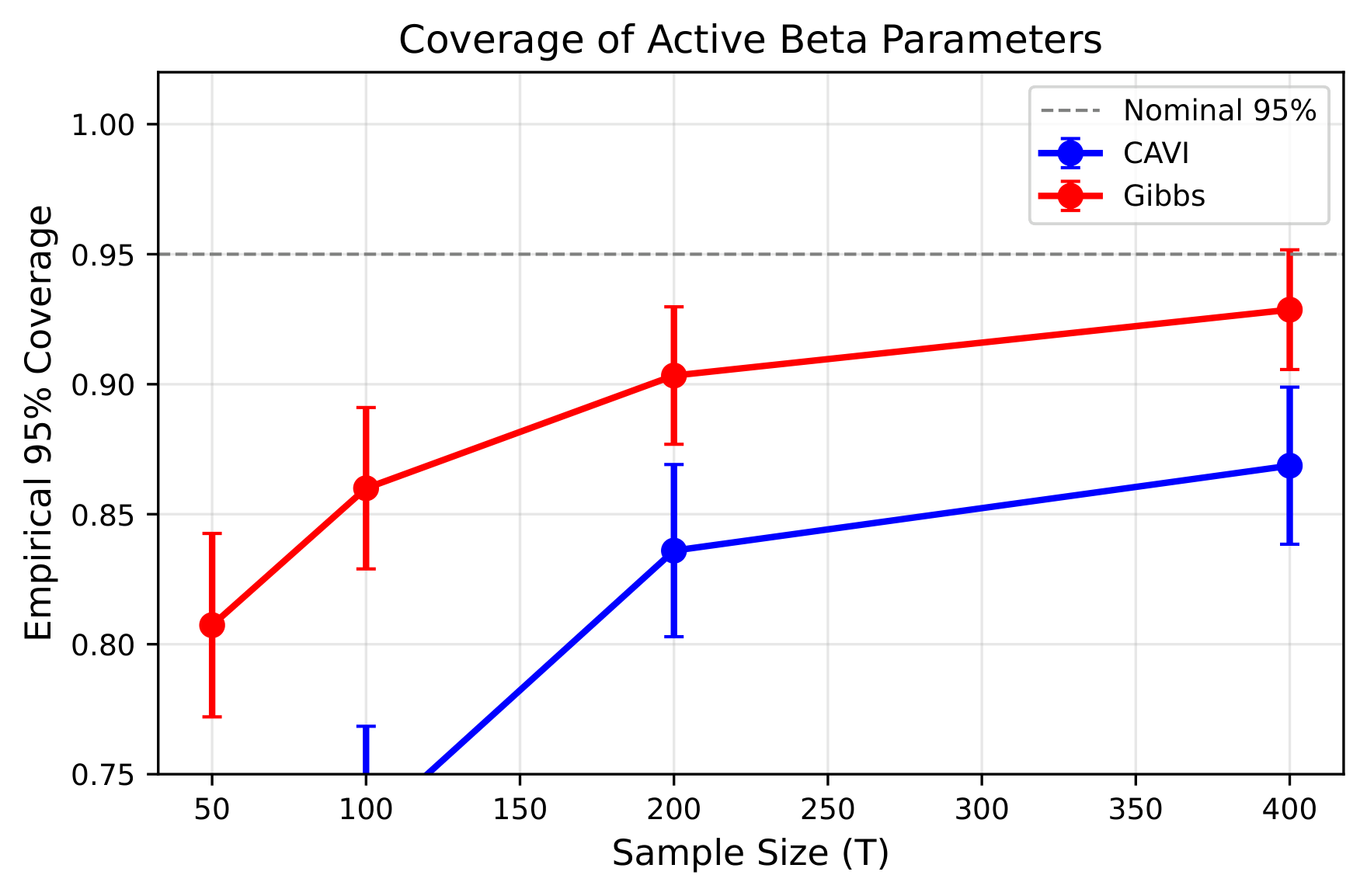}
\caption{Empirical 95\% coverage of active $\beta_j$ parameters as a function of sample size~$T$, with $J=3$. Error bars show $\pm 1$ standard error across replications. Both methods improve with~$T$. The CAVI--Gibbs gap narrows from approximately 10 percentage points at $T=100$ to 6 percentage points at $T=400$. The $T=50$ CAVI point (58.4\%, Table~\ref{tab:samplesize}) falls below the displayed range.}
\label{fig:coverage}
\end{figure}

\subsection{Robustness and stress tests}

\textbf{Weight profiles} (configs 2A-1 to 2A-3). Decreasing, hump-shaped, and U-shaped profiles are recovered accurately by both methods. Figure~\ref{fig:weights_mc} illustrates the recovery for the baseline $J=3$ configuration, confirming that CAVI and Gibbs produce nearly identical weight estimates across distinct profile shapes. Coverage on~$\eta$ exceeds 92\% for all shapes.

\textbf{Signal-to-noise ratio.} High SNR ($\sigma_0^2 = 0.25$) yields 90\% coverage on~$\beta$ for CAVI; low SNR ($\sigma_0^2 = 4$) reduces it to 75\%. Gibbs coverage also declines (95\% to 86\%), indicating model-inherent difficulty.

\textbf{Frequency ratio.} At $K=65$ (quarterly-to-daily), both CAVI and Gibbs struggle: $\beta$ coverage drops to 48--50\% for both, confirming a DGP/basis limitation rather than an algorithm failure.

Full robustness and stress test tables are in Online Appendix~A.3.

\begin{figure}[t]
\centering
\includegraphics[width=\textwidth]{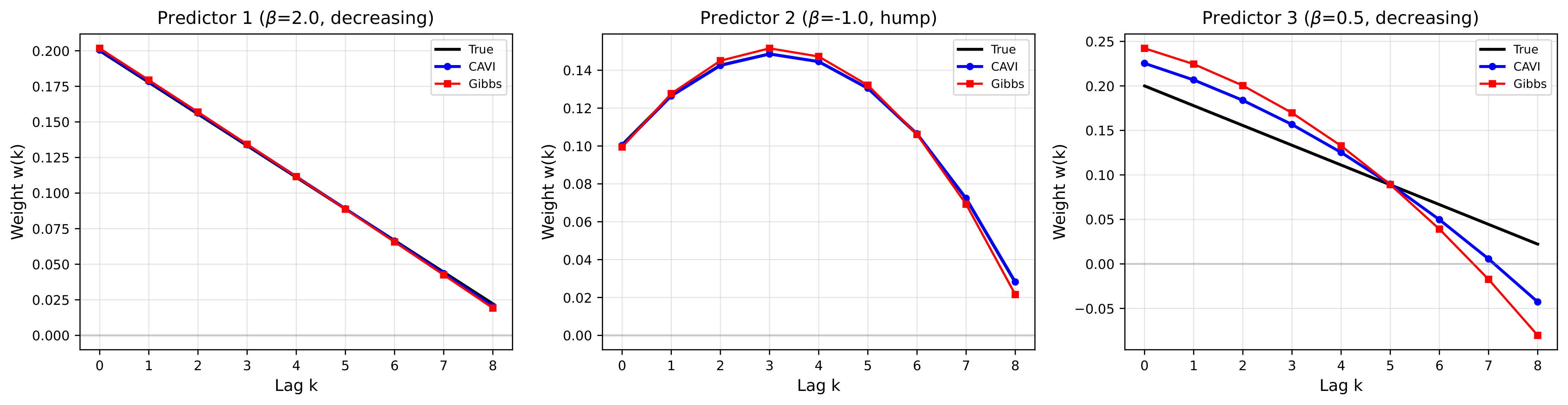}
\caption{Weight profile recovery for a representative replication ($J=3$, $T=200$). Left: decreasing profile ($\beta_1 = 2.0$). Center: hump-shaped profile ($\beta_2 = -1.0$). Right: decreasing profile ($\beta_3 = 0.5$). CAVI (blue) and Gibbs (red) estimates are nearly indistinguishable from the true profile (black) for strong predictors.}
\label{fig:weights_mc}
\end{figure}

\subsection{Comparison with automatic differentiation VI}

A natural question is whether generic Automatic Differentiation Variational Inference (ADVI; \citealp{kucukelbir2017}) could replace our model-specific CAVI. We compare against both mean-field ADVI and full-rank ADVI implemented in PyMC~5 across $J \in \{1, 3, 5\}$ with 5 replications each (10,000 gradient steps per fit).

\begin{table}[H]
\centering
\caption{CAVI versus generic ADVI (mean-field and full-rank). 5 replications per configuration. $T=200$, $K=9$, $P=3$.}
\label{tab:advi}
\small
\begin{tabular}{@{}clcccc@{}}
\toprule
$J$ & Method & Bias($\beta$) & RMSE($\beta$) & Time (s) & Speedup vs ADVI-MF \\
\midrule
1 & CAVI    & 0.128 & 0.235 & 0.000 & $>$100,000$\times$ \\
1 & ADVI-MF & 1.851 & 1.851 & 136.0 & --- \\
1 & ADVI-FR & 1.972 & 1.972 & 137.3 & --- \\
\addlinespace
3 & CAVI    & 0.128 & 0.157 & 0.009 & 24,644$\times$ \\
3 & ADVI-MF & 1.381 & 1.457 & 221.8 & --- \\
3 & ADVI-FR & 1.467 & 1.547 & 302.0 & --- \\
\addlinespace
5 & CAVI    & 0.164 & 0.226 & 0.022 & 1,934$\times$ \\
5 & ADVI-MF & 1.074 & 1.219 &  42.6 & --- \\
5 & ADVI-FR & 1.149 & 1.302 &  61.9 & --- \\
\bottomrule
\end{tabular}
\end{table}

CAVI dominates ADVI on both dimensions. \emph{Accuracy}: ADVI bias is 7--14 times larger than CAVI across all configurations, indicating that stochastic gradient optimization fails to find the correct posterior mode for the MIDAS bilinear structure. The ADVI loss function landscapes are likely plagued by the same funnel geometry that defeats NUTS. \emph{Speed}: CAVI is 2,000--100,000 times faster. The advantage is even more dramatic than the CAVI--Gibbs comparison because ADVI requires thousands of gradient evaluations through automatic differentiation, whereas CAVI exploits closed-form conditional conjugacy.

These results confirm that the contribution of this paper is not merely computational speedup over MCMC, but the derivation of model-specific updates that exploit conditional structure unavailable to black-box methods.

\subsection{Discussion}

The Monte Carlo evidence supports four conclusions. First, CAVI is a reliable estimator of posterior means: bias and RMSE match Gibbs across all configurations, including $J=25$ where the Gibbs benchmark is now available (Table~\ref{tab:scalability}). Second, CAVI dominates generic ADVI by a wide margin on both accuracy and speed (Table~\ref{tab:advi}), confirming the value of model-specific variational derivations. Third, weight function parameters are well-calibrated (coverage ${}>{}$92\%). Fourth, impact coefficient credible intervals exhibit mean-field underdispersion---moderate for $J \leq 3$ ($\geq$84\%) and substantial for $J \geq 5$ ($\approx$55--60\%).

For applications focused on point forecasting, variable selection, or weight profile estimation, CAVI offers a compelling alternative. For well-calibrated credible intervals on~$\beta$ in high-dimensional settings, Gibbs sampling or structured VI \citep{loaiza2022} should be preferred.

\paragraph{Practical coverage guidance.} For practitioners requiring uncertainty quantification on~$\beta$, we offer the following guidance based on the full Monte Carlo evidence. For $J \leq 3$ with $T \geq 200$, CAVI credible intervals achieve coverage above 83\%, adequate for most applications. A simple post-hoc calibration---inflating the variational standard deviation by a multiplier $\kappa$---restores nominal coverage for $J \leq 3$: a multiplier of $\kappa = 1.2$ suffices for $J=1$ (calibrated coverage 94.2\%) and $\kappa = 1.8$ for $J=3$ (calibrated coverage 94.9\%). For $J \geq 5$, the mean-field approximation becomes inadequate for interval estimation: even a multiplier of $\kappa = 3.0$ only reaches 86\% coverage at $J=5$ and 81\% at $J=50$, because the underdispersion is non-uniform across parameters. In such settings, we recommend using the Gibbs sampler for inference on~$\beta$ or, when computational constraints preclude MCMC, adopting the linear response corrections of \citet{giordano2018}, which adjust variational covariances using second-order sensitivity of the ELBO. Extension to structured variational families that capture $\mathrm{Cov}(\beta_j, \boldsymbol{\eta}_j)$ is a natural direction for future work.

\section{Empirical Application: Realized Volatility Forecasting}

\subsection{Setup}

We apply the MIDAS-RV specification of \citet{ghysels2006} to forecast monthly realized volatility (RV) of the S\&P~500 index:
\begin{equation}\label{eq:midasrv}
\log RV_t = \alpha + \beta \sum_{k=0}^{K-1} w(k; \boldsymbol{\theta}) \, r_{t-k}^2 + \varepsilon_t,
\end{equation}
where $RV_t = \sum_{d=1}^{D_t} r_{t,d}^2$ is the sum of squared daily returns within month~$t$. We use $K = 22$ daily lags and Almon polynomials with $P = 3$, and consider $J = 1$ and $J = 3$ specifications. In the $J=3$ specification, the three predictors correspond to daily squared returns from the current month ($t$), the previous month ($t{-}1$), and two months prior ($t{-}2$), each aggregated with its own MIDAS weight function. Benchmarks include HAR-RV \citep{corsi2009}, AR(1), AR(4), and the historical average.

Figure~\ref{fig:rvdata} displays the S\&P~500 monthly realized volatility series from January 2000 through December 2025 (312 months, 6,537 trading days). The dashed vertical line marks the start of the out-of-sample evaluation period (187 months, expanding window from 120-month initial training sample).

\begin{figure}[H]
\centering
\includegraphics[width=\textwidth]{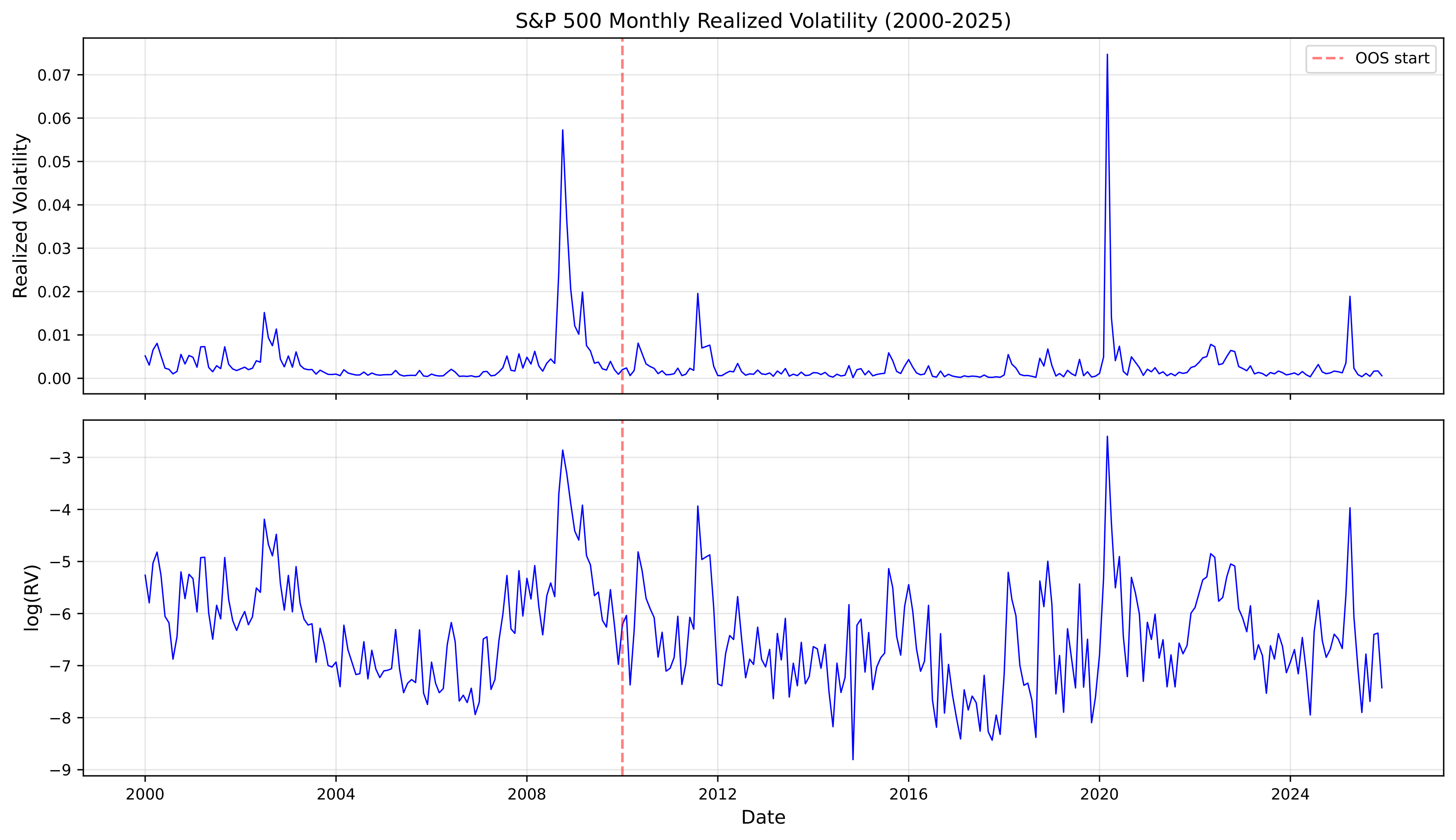}
\caption{S\&P~500 monthly realized volatility (2000--2025). Top: raw $RV_t$. Bottom: $\log RV_t$. Dashed line: start of out-of-sample period. Prominent spikes correspond to the 2008 financial crisis and the 2020 COVID-19 shock.}
\label{fig:rvdata}
\end{figure}

\subsection{Forecast accuracy}

Table~\ref{tab:forecast} reports out-of-sample forecast accuracy. Diebold-Mariano tests \citep{diebold1995} are against HAR-RV.

\begin{table}[H]
\centering
\caption{Out-of-sample forecast accuracy (187 months). MSE and MAE on $\log RV_t$.}
\label{tab:forecast}
\small
\begin{tabular}{@{}lccccc@{}}
\toprule
Model & MSE & MAE & Rel.~MSE vs HAR & DM $p$-value & Time/month \\
\midrule
MIDAS $J\!=\!1$ (CAVI)  & 0.720 & 0.651 & 1.058 & 0.226 & 0.001 s \\
MIDAS $J\!=\!1$ (Gibbs) & 0.720 & 0.651 & 1.058 & 0.224 & 1.13 s \\
MIDAS $J\!=\!3$ (CAVI)  & 0.688 & 0.633 & 1.011 & 0.864 & 0.009 s \\
MIDAS $J\!=\!3$ (Gibbs) & 0.702 & 0.635 & 1.031 & 0.633 & 2.48 s \\
HAR-RV                   & 0.680 & 0.645 & 1.000 & ---   & $<$0.001 s \\
AR(1)                    & 0.713 & 0.657 & 1.048 & 0.045 & $<$0.001 s \\
AR(4)                    & 0.711 & 0.653 & 1.046 & 0.358 & $<$0.001 s \\
Historical Avg           & 0.997 & 0.802 & 1.465 & 0.000 & $<$0.001 s \\
\bottomrule
\end{tabular}
\end{table}

\textbf{CAVI--Gibbs agreement.} The MSE difference between CAVI and Gibbs is 0.00006 for $J=1$ (relative difference: 0.01\%), confirming the Monte Carlo finding. Figure~\ref{fig:forecasts} shows that MIDAS~$J=1$ (CAVI) tracks the actual log-RV series closely, with performance comparable to HAR-RV.

\textbf{Forecasting performance.} MIDAS $J=3$ nearly matches HAR-RV (relative MSE~$=1.011$; DM $p=0.864$). Neither MIDAS specification is significantly worse than HAR at conventional levels, while AR(1) is significantly inferior ($p=0.045$) and the historical average is strongly dominated ($p<0.001$).

\textbf{Computational time.} CAVI completes each monthly estimation in 1~ms ($J=1$) or 9~ms ($J=3$), yielding speedups of $936\times$ and $264\times$ respectively. In a single-pass estimation such as this application, the absolute time difference (1--2 seconds for Gibbs) may seem inconsequential. However, the speed advantage becomes decisive in three settings that are increasingly common in applied work: (i)~large-scale model comparison exercises scanning over many predictor sets, lag structures, or basis specifications; (ii)~real-time nowcasting pipelines with hundreds of candidate predictors, where estimation must complete within a publication cycle; and (iii)~expanding-window or rolling-window evaluations with hundreds of re-estimations, as in the present exercise (187 out-of-sample months $\times$ multiple specifications). With $J=10$ predictors and 500 windows, CAVI completes in approximately 33 seconds versus 2.2 hours for Gibbs.

\begin{figure}[t]
\centering
\includegraphics[width=\textwidth]{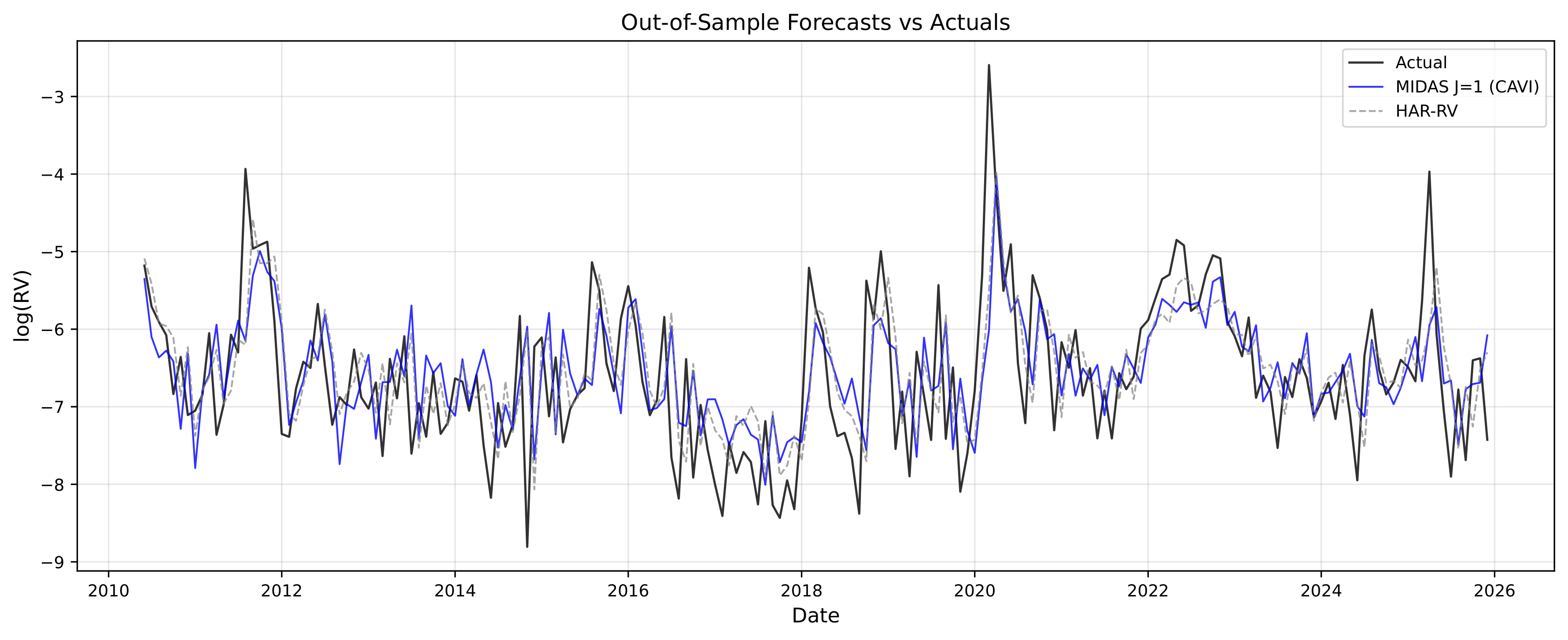}
\caption{Out-of-sample forecasts of $\log RV_t$ (2010--2025). Black: actual. Blue: MIDAS $J=1$ (CAVI). Gray dashed: HAR-RV. Both models track the actual series closely, with large forecast errors concentrated around the March 2020 COVID-19 volatility spike.}
\label{fig:forecasts}
\end{figure}

\subsection{Estimated weight profiles}

Figure~\ref{fig:weights_emp} displays the estimated MIDAS weight profile from the full-sample $J=1$ estimation. Both CAVI and Gibbs produce a monotonically decreasing profile, assigning greatest weight to the most recent daily observations---economically sensible, as recent squared returns are more informative about current-month volatility. The shaded regions show 95\% credible bands. The CAVI band (purple) is wider than the Gibbs band (pink), reflecting the different uncertainty propagation mechanisms: CAVI inflates the effective variance through the $\mathbb{E}_q[\beta^2]$ term in~\eqref{eq:Sigma_eta}, which includes the variance of~$\beta$, while the Gibbs band is computed conditionally on each posterior draw of~$\beta$. The point estimates are virtually identical.

The $J=3$ specification tells a complementary story (Figure~\ref{fig:weights_j3} in the Appendix): Predictor~1 (current month) retains the same decreasing profile, while Predictors~2 and~3 (months $t{-}1$ and $t{-}2$) have weights concentrated near zero with wide credible bands, indicating negligible marginal contribution---consistent with the modest forecasting gain of $J=3$ over $J=1$ (Table~\ref{tab:forecast}).

\begin{figure}[t]
\centering
\includegraphics[width=0.7\textwidth]{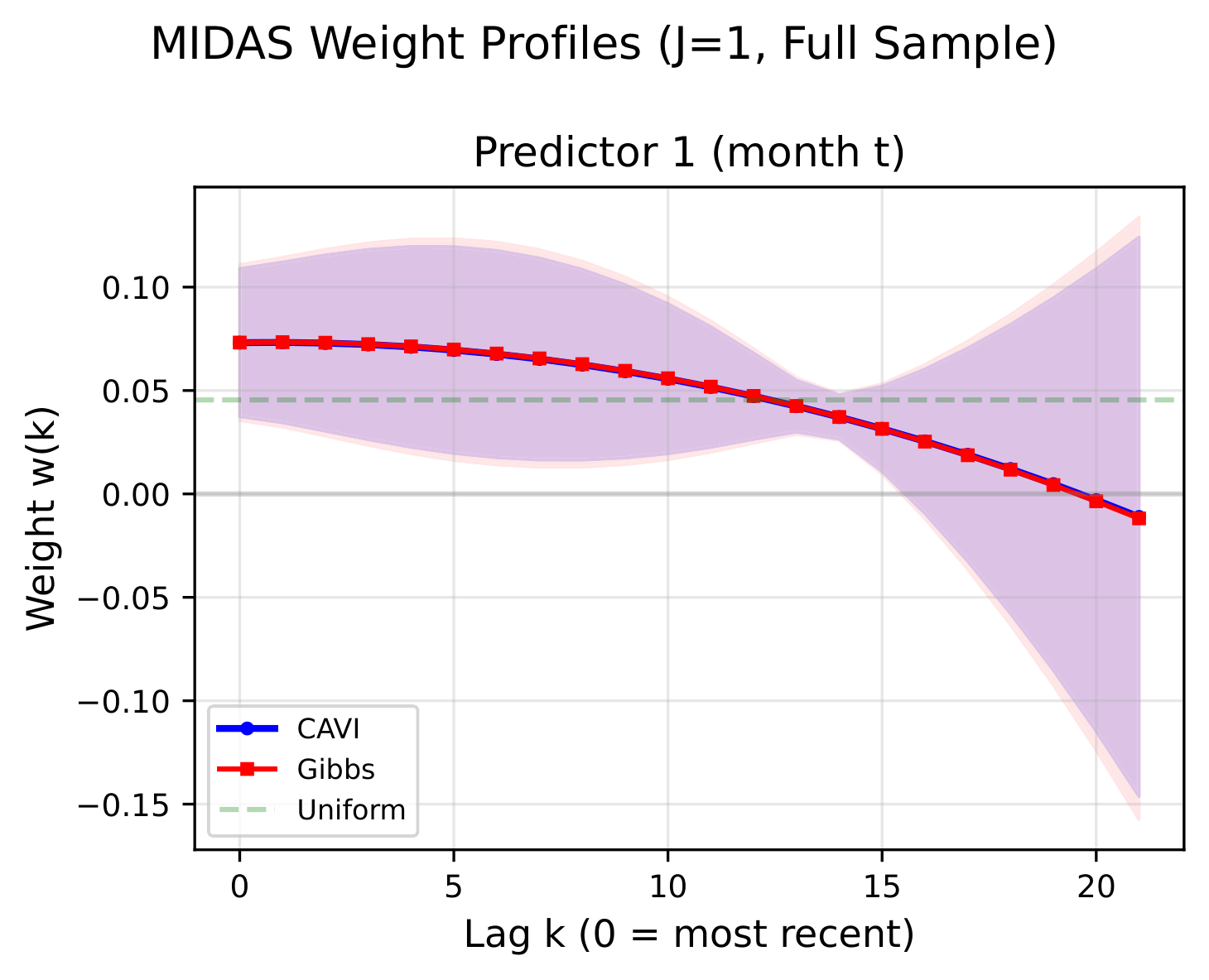}
\caption{Estimated MIDAS weight profiles ($J=1$, full sample). Blue: CAVI posterior mean. Red: Gibbs posterior mean. Shaded regions: 95\% credible bands. Dashed green: uniform weights. Point estimates are virtually identical. The monotonically decreasing profile assigns greater weight to recent daily returns, consistent with the intuition that recent squared returns are more informative about current-month volatility.}
\label{fig:weights_emp}
\end{figure}

\section{Conclusion}

This paper develops the first variational inference algorithm for Bayesian MIDAS regression. The CAVI algorithm exploits conditional conjugacy of the linearly parameterized model to produce closed-form updates for all parameter blocks, propagating uncertainty across blocks through second moments and including a covariance correction specific to the MIDAS bilinear setting.

A Monte Carlo study across 21 configurations demonstrates that CAVI posterior means are essentially identical to those from a block Gibbs sampler---now confirmed through $J=25$---while achieving speedups of $107\times$ to $1{,}772\times$ (Table~\ref{tab:timing}). Generic ADVI, by contrast, produces bias 7--14 times larger while being orders of magnitude slower, confirming that model-specific derivations are essential for this model class. Weight parameters are well-calibrated (coverage above 92\%). Impact coefficient credible intervals exhibit mean-field underdispersion, with coverage declining from 89\% to 55\% as predictor dimension grows---an honestly documented trade-off.

Three directions for future work follow naturally. First, extension to time-varying parameter MIDAS requires structured variational approximations \citep{loaiza2022} that capture temporal dependence---for example, targeting GDP nowcasting applications where parameter drift is economically motivated. Second, global-local shrinkage priors (Bayesian Lasso, horseshoe, Dirichlet-Laplace) can be incorporated with minimal additional derivation, enabling principled variable selection in large MIDAS panels. Third, stochastic volatility can be accommodated through the Gaussian approximation methods of \citet{chanyu2022}.


\newpage
\appendix
\section*{Online Appendix}
\setcounter{section}{0}
\renewcommand{\thesection}{A.\arabic{section}}

\section{Proof of Proposition~\ref{prop:block2}}

Starting from the CAVI rule~\eqref{eq:cavi_rule}, the log-optimal density for~$\boldsymbol{\eta}_j$ involves isolating terms dependent on~$\boldsymbol{\eta}_j$ from the expected log-joint. Writing $e_t = e_t^{(-j)} - \beta_j \mathbf{r}_t^{(j)\top}\boldsymbol{\eta}_j$ and expanding:

The quadratic term in~$\boldsymbol{\eta}_j$ gives
\[
-\tfrac{1}{2}\mathbb{E}_q[\sigma^{-2}]\,\mathbb{E}_q[\beta_j^2]\sum_t \boldsymbol{\eta}_j^\top\mathbf{r}_t^{(j)}\mathbf{r}_t^{(j)\top}\boldsymbol{\eta}_j - \tfrac{1}{2}\sigma_\eta^{-2}\boldsymbol{\eta}_j^\top\boldsymbol{\eta}_j,
\]
yielding the precision matrix~\eqref{eq:Sigma_eta}.

For the linear term, the key calculation is $\mathbb{E}_q[\beta_j \cdot e_t^{(-j)}]$. Since $\beta_j = \mathbf{e}_{j+1}^\top\boldsymbol{\xi}$:
\begin{align*}
\mathbb{E}_q[\beta_j(y_t - \mathbf{g}_t^\top\boldsymbol{\xi})] &= \mu_{\beta_j}\,y_t - \mathbf{g}_t^\top\mathbb{E}_q[\boldsymbol{\xi}\boldsymbol{\xi}^\top]\mathbf{e}_{j+1} \\
&= \mu_{\beta_j}\,y_t - \mathbf{g}_t^\top(\boldsymbol{\mu}_\xi\boldsymbol{\mu}_\xi^\top + \boldsymbol{\Sigma}_\xi)\mathbf{e}_{j+1} \\
&= \mu_{\beta_j}\,\bar{e}_t^{(-j)} - \mathbf{g}_t^\top\boldsymbol{\Sigma}_\xi\mathbf{e}_{j+1},
\end{align*}
where the last term is the covariance correction. Collecting and applying $\boldsymbol{\mu}_{\eta_j} = \boldsymbol{\Sigma}_{\eta_j} \times (\text{linear coefficient})$ yields~\eqref{eq:mu_eta}. \hfill$\square$

\section{ELBO computation}

The ELBO decomposes as:

\textbf{Expected log-likelihood:}
\[
\mathbb{E}_q[\log p(\mathbf{y}\mid\boldsymbol{\xi},\boldsymbol{\eta},\sigma^2)] = -\tfrac{T}{2}\log(2\pi) - \tfrac{T}{2}(\log\tilde{b} - \psi(\tilde{a})) - \tfrac{\tilde{a}}{2\tilde{b}}\sum_t \mathbb{E}_q[e_t^2].
\]

\textbf{Expected log-priors:}
\begin{gather*}
\mathbb{E}_q[\log p(\boldsymbol{\xi})] = -\tfrac{J+1}{2}\log(2\pi) - \tfrac{1}{2}\log|\boldsymbol{\Lambda}_\xi^{-1}| - \tfrac{1}{2}\mathrm{tr}(\boldsymbol{\Lambda}_\xi(\boldsymbol{\mu}_\xi\boldsymbol{\mu}_\xi^\top + \boldsymbol{\Sigma}_\xi)), \\
\mathbb{E}_q[\log p(\boldsymbol{\eta}_j)] = -\tfrac{P_j-1}{2}\log(2\pi\sigma_\eta^2) - \tfrac{1}{2\sigma_\eta^2}(\|\boldsymbol{\mu}_{\eta_j}\|^2 + \mathrm{tr}(\boldsymbol{\Sigma}_{\eta_j})), \\
\mathbb{E}_q[\log p(\sigma^2)] = a_0\log b_0 - \log\Gamma(a_0) - (a_0+1)(\log\tilde{b} - \psi(\tilde{a})) - b_0\tilde{a}/\tilde{b}.
\end{gather*}

\textbf{Entropies:}
\begin{gather*}
H[q(\boldsymbol{\xi})] = \tfrac{J+1}{2}(1+\log(2\pi)) + \tfrac{1}{2}\log|\boldsymbol{\Sigma}_\xi|, \\
H[q(\boldsymbol{\eta}_j)] = \tfrac{P_j-1}{2}(1+\log(2\pi)) + \tfrac{1}{2}\log|\boldsymbol{\Sigma}_{\eta_j}|, \\
H[q(\sigma^2)] = \tilde{a} + \log(\tilde{b}\,\Gamma(\tilde{a})) - (1+\tilde{a})\psi(\tilde{a}).
\end{gather*}

\section{Additional Monte Carlo results}

\begin{table}[H]
\centering
\caption{Robustness results (Tier 2, Panel A): Weight profile shapes. $T=200$, $J=3$, $P=3$.}
\small
\begin{tabular}{@{}llccccc@{}}
\toprule
Profile & Method & Bias($\beta$) & Cov95($\beta$) & Bias($\eta$) & Cov95($\eta$) & ESS \\
\midrule
Decreasing   & CAVI  & 0.051 & 0.749 & 0.012 & 0.921 & --- \\
Decreasing   & Gibbs & 0.077 & 0.900 & 0.023 & 0.946 & 643 \\
Hump-shaped  & CAVI  & 0.057 & 0.819 & 0.010 & 0.942 & --- \\
Hump-shaped  & Gibbs & 0.079 & 0.899 & 0.019 & 0.961 & 1,034 \\
U-shaped     & CAVI  & 0.064 & 0.827 & 0.006 & 0.962 & --- \\
U-shaped     & Gibbs & 0.086 & 0.891 & 0.009 & 0.970 & 1,108 \\
\bottomrule
\end{tabular}
\end{table}

\begin{table}[H]
\centering
\caption{Robustness (Panel B): Signal-to-noise ratio and frequency ratio.}
\small
\begin{tabular}{@{}llccccc@{}}
\toprule
Variation & Method & Bias($\beta$) & Cov95($\beta$) & Bias($\eta$) & Cov95($\eta$) & ESS \\
\midrule
$\sigma_0^2=0.25$ (high SNR) & CAVI  & 0.010 & 0.899 & 0.001 & 0.948 & --- \\
                              & Gibbs & 0.012 & 0.953 & 0.002 & 0.959 & 3,280 \\
$\sigma_0^2=4$ (low SNR)     & CAVI  & 0.112 & 0.747 & 0.011 & 0.943 & --- \\
                              & Gibbs & 0.148 & 0.861 & 0.011 & 0.966 & 883 \\
$K=5$ (quarterly)             & CAVI  & 0.079 & 0.793 & 0.010 & 0.936 & --- \\
                              & Gibbs & 0.112 & 0.867 & 0.009 & 0.957 & 696 \\
$K=65$ (daily)                & CAVI  & 0.332 & 0.483 & 0.012 & 0.952 & --- \\
                              & Gibbs & 0.523 & 0.497 & 0.008 & 0.981 & 270 \\
\bottomrule
\end{tabular}
\end{table}

\begin{table}[H]
\centering
\caption{Stress test results (Tier 3).}
\small
\begin{tabular}{@{}lllcccccc@{}}
\toprule
$T$ & $J$ & Method & Bias($\beta$) & RMSE($\beta$) & Cov95($\beta$) & Bias($\eta$) & Cov95($\eta$) & ESS \\
\midrule
50  & 10 & CAVI  & 0.240 & 0.383 & 0.374 & 0.068 & 0.957 & --- \\
50  & 10 & Gibbs & 0.266 & 0.382 & 0.842 & 0.059 & 0.993 & 366 \\
200 & 10 & CAVI  & 0.156 & 0.273 & 0.441 & 0.026 & 0.955 & --- \\
200 & 10 & Gibbs & 0.203 & 0.293 & 0.708 & 0.016 & 0.981 & 339 \\
50  & 3  & CAVI  & 0.350 & 0.573 & 0.455 & 0.038 & 0.933 & --- \\
50  & 3  & Gibbs & 0.433 & 0.599 & 0.783 & 0.048 & 0.980 & 646 \\
\bottomrule
\end{tabular}
\end{table}

\section{Prior and initialization sensitivity}

\paragraph{Prior sensitivity.} Table~\ref{tab:prior_sensitivity} reports CAVI performance under three settings of the prior variance $\sigma_\beta^2 \in \{5, 10, 20\}$ for the impact coefficients, with $J=3$, $T=200$, 100 replications. Results are virtually identical across specifications, confirming robustness to the choice of prior variance over a fourfold range.

\begin{table}[H]
\centering
\caption{Prior sensitivity: CAVI performance under different $\sigma_\beta^2$. $J=3$, $T=200$, 100 replications.}
\label{tab:prior_sensitivity}
\small
\begin{tabular}{@{}cccccc@{}}
\toprule
$\sigma_\beta^2$ & Bias($\beta$) & RMSE($\beta$) & Cov95($\beta$) & Bias($\eta$) & Cov95($\eta$) \\
\midrule
5  & 0.128 \tiny{(.007)} & 0.140 \tiny{(.008)} & 0.875 \tiny{(.033)} & 0.021 & 0.932 \\
10 & 0.128 \tiny{(.007)} & 0.140 \tiny{(.008)} & 0.880 \tiny{(.032)} & 0.021 & 0.932 \\
20 & 0.127 \tiny{(.007)} & 0.140 \tiny{(.008)} & 0.880 \tiny{(.032)} & 0.021 & 0.932 \\
\bottomrule
\end{tabular}
\end{table}

\paragraph{Initialization sensitivity.} We compare the default OLS warm-start initialization against random initialization ($\mu_\xi \sim \mathcal{N}(\mathbf{0}, 0.01\mathbf{I})$, $\mu_{\eta_j} \sim \mathcal{N}(\mathbf{0}, 0.01\mathbf{I})$) over 50 replications with $J=3$, $T=200$. OLS initialization yields substantially better results: mean absolute bias of 0.121 (SE~0.011) versus 0.634 (SE~0.071) for random initialization, with the maximum ELBO difference across replications reaching 76.3. This sensitivity arises because the ELBO landscape of the bilinear model has multiple local optima; the OLS warm-start directs the algorithm toward the basin containing the global optimum. We recommend OLS initialization as the default for all applications.

\section{Computational timing}

Table~\ref{tab:timing} reports the mean CAVI and Gibbs estimation times across all Monte Carlo configurations, together with speedup factors and the minimum effective sample size (ESS) of the Gibbs chain. The speedup ranges from $107\times$ ($J=25$, 50 replications) to $1{,}772\times$ ($J=1$). For $J=50$, Gibbs timing is not reported as the computational cost was prohibitive.

\begin{table}[H]
\centering
\caption{Computational timing across Monte Carlo configurations. CAVI and Gibbs times are per-fit averages (500 replications except $J=25$ Gibbs: 50 replications).}
\label{tab:timing}
\footnotesize
\begin{tabular}{@{}lrrrrc@{}}
\toprule
Config & CAVI (s) & Iters & Gibbs (s) & Speedup & Min ESS \\
\midrule
1A-1 ($J\!=\!1$)   & 0.001 &  3 & 1.82  & 1,772$\times$ & 4,103 \\
1A-2 ($J\!=\!3$)   & 0.006 & 10 & 3.59  & 645$\times$ & 1,066 \\
1A-3 ($J\!=\!5$)   & 0.023 & 29 & 6.54  & 283$\times$ & 701 \\
1A-4 ($J\!=\!10$)  & 0.066 & 37 & 15.56 & 238$\times$ & 539 \\
1A-5 ($J\!=\!25$)  & 0.322 & 49 & 34.44 & 107$\times$ & 134 \\
1A-6 ($J\!=\!50$)  & 1.383 & 67 & ---   & --- & --- \\
\midrule
2D-3 ($K\!=\!65$)  & 0.030 & 59 & 4.06  & 136$\times$ & 270 \\
\bottomrule
\end{tabular}
\end{table}

\section{ELBO values}

Table~\ref{tab:elbo} reports the final ELBO values across all Monte Carlo configurations. The ELBO increases (becomes less negative) with higher SNR and decreases with more predictors, as expected from the additional model complexity.

\begin{table}[H]
\centering
\caption{Final ELBO values across Monte Carlo configurations (500 replications).}
\label{tab:elbo}
\small
\begin{tabular}{@{}lrrrr@{}}
\toprule
Config & Mean & Std & Min & Max \\
\midrule
1A-1 ($J\!=\!1$)    & $-$235.0 & 10.1 & $-$264.9 & $-$210.6 \\
1A-2 ($J\!=\!3$)    & $-$253.5 &  9.5 & $-$285.0 & $-$220.1 \\
1A-3 ($J\!=\!5$)    & $-$263.7 &  9.5 & $-$290.0 & $-$228.5 \\
1A-4 ($J\!=\!10$)   & $-$291.3 &  9.8 & $-$318.3 & $-$263.9 \\
1A-5 ($J\!=\!25$)   & $-$370.9 &  8.2 & $-$390.8 & $-$338.2 \\
1A-6 ($J\!=\!50$)   & $-$503.5 &  7.7 & $-$527.3 & $-$480.0 \\
\midrule
2B-1 (high SNR) & $-$100.6 &  9.8 & $-$130.4 & $-$70.1 \\
2B-2 (low SNR)  & $-$318.2 &  9.6 & $-$345.4 & $-$286.8 \\
\bottomrule
\end{tabular}
\end{table}

\section{Additional figures}

\begin{figure}[H]
\centering
\includegraphics[width=0.7\textwidth]{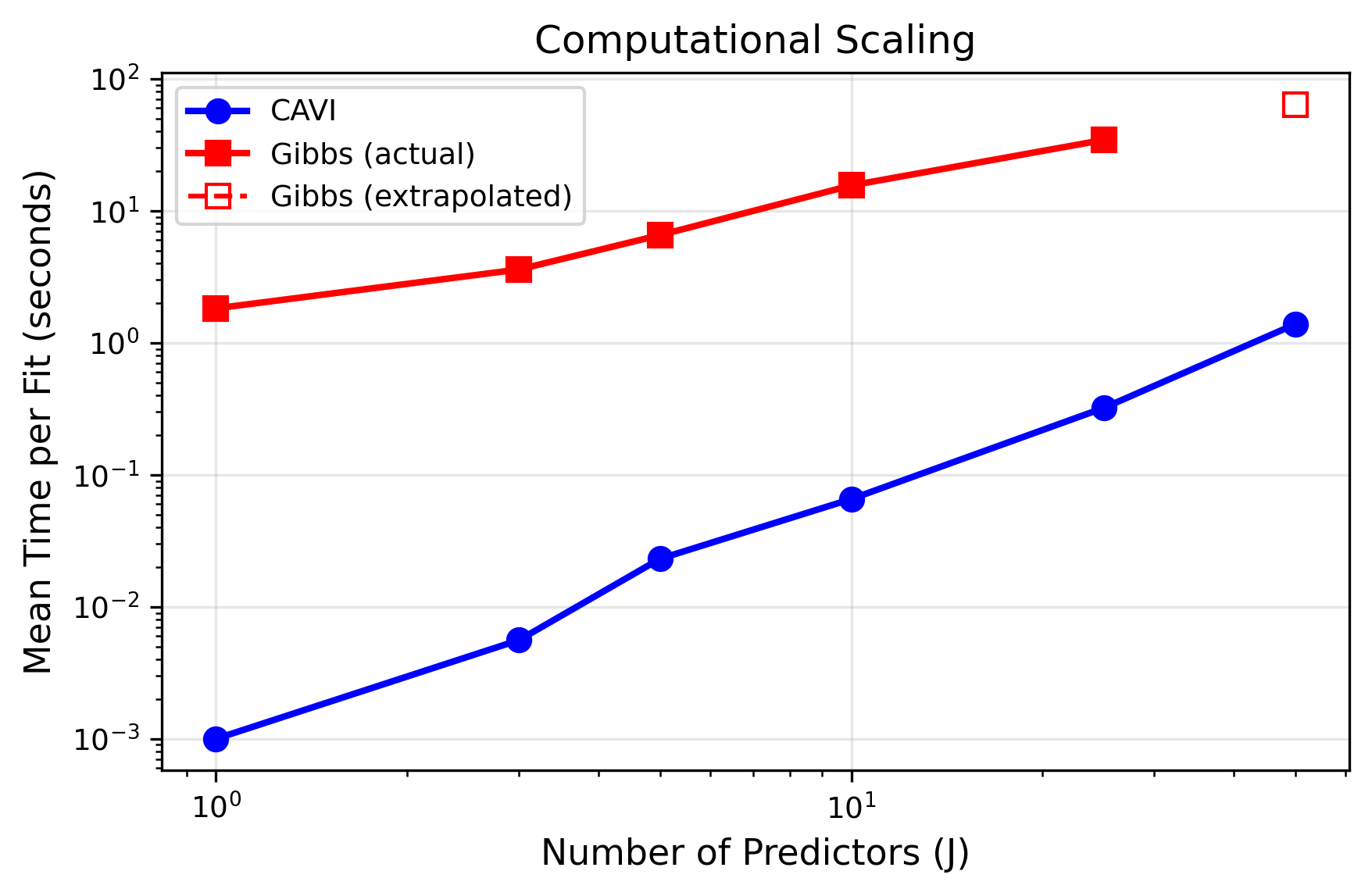}
\caption{Computational scaling: mean estimation time per fit as a function of~$J$ (log-log scale). Blue: CAVI. Red: Gibbs (solid: measured; open: extrapolated). The gap widens with $J$ due to Gibbs mixing degradation.}
\label{fig:timing}
\end{figure}

\begin{figure}[H]
\centering
\includegraphics[width=\textwidth]{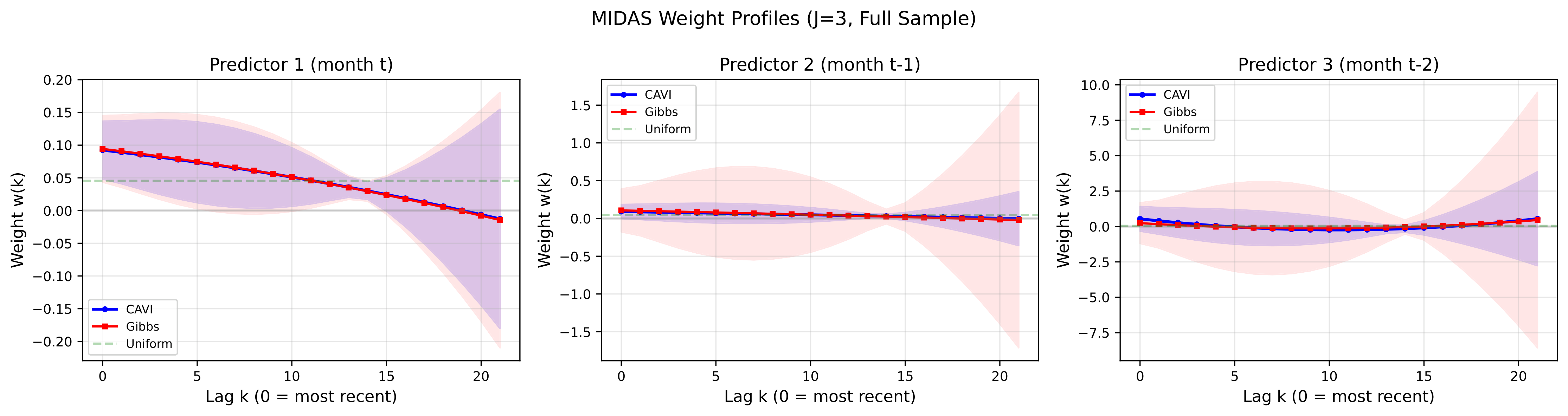}
\caption{Estimated MIDAS weight profiles ($J=3$, full sample). Left: Predictor~1 (month~$t$) shows a monotonically decreasing profile with tight credible bands, consistent with the $J=1$ result (Figure~\ref{fig:weights_emp}). Center and Right: Predictors~2 and~3 (months~$t{-}1$ and~$t{-}2$) have weights concentrated near zero with wide credible bands, indicating negligible marginal contribution beyond the current month's squared returns. CAVI (blue) and Gibbs (red) point estimates are virtually identical across all three predictors.}
\label{fig:weights_j3}
\end{figure}

\begin{figure}[H]
\centering
\includegraphics[width=\textwidth]{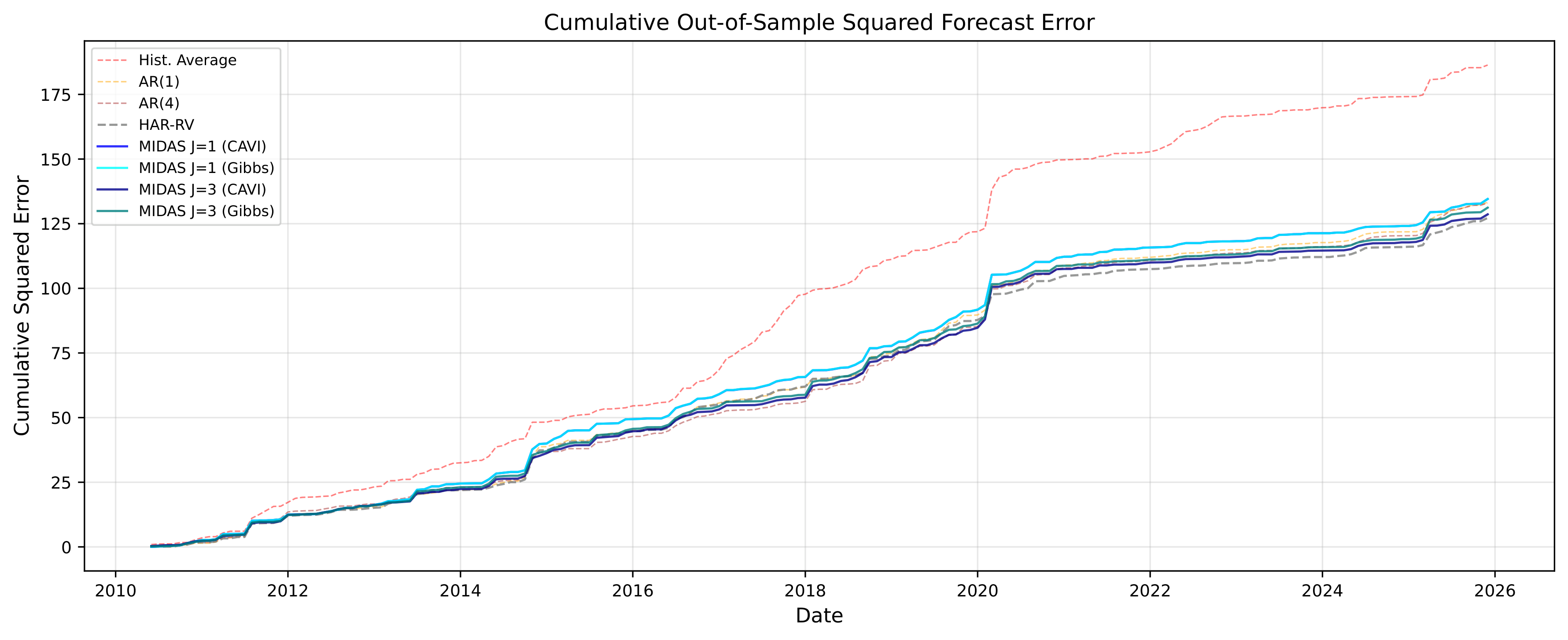}
\caption{Cumulative out-of-sample squared forecast error (2010--2025). The historical average (pink) is clearly dominated. All other models cluster closely, with a sharp increase around March 2020 (COVID-19). MIDAS $J=3$ (CAVI) tracks HAR-RV most closely.}
\label{fig:cumsse}
\end{figure}

\begin{figure}[H]
\centering
\includegraphics[width=\textwidth]{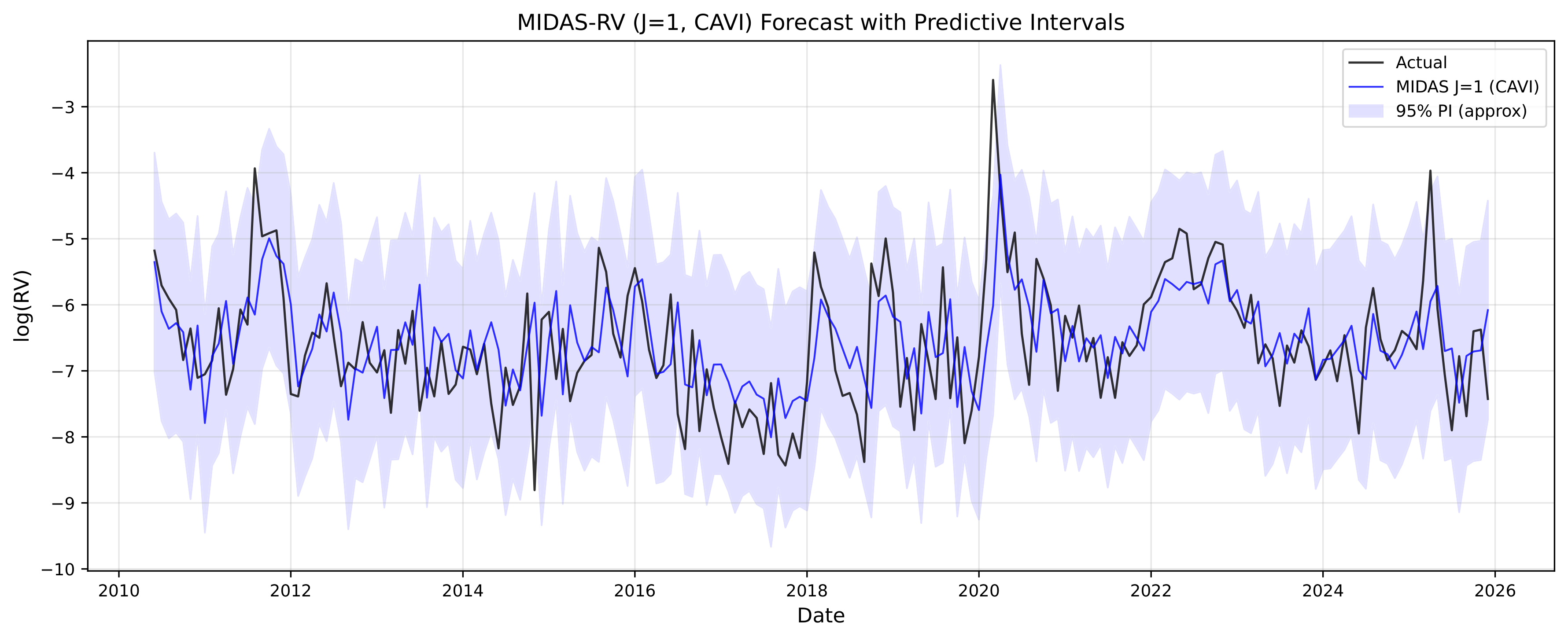}
\caption{MIDAS-RV ($J=1$, CAVI) out-of-sample forecasts with approximate 95\% predictive intervals. The March 2020 COVID-19 spike falls outside the interval, as expected for an extreme tail event.}
\label{fig:predint}
\end{figure}

\end{document}